\pgfplotsset{compat=1.16}
\newcommand{\sumgraph}{\underset{n \in [N]}{\sum}}
\def \sumgraphu {\underset{n\in [N]}{ \sum}}
\def \sumneighbors {\underset {m\in{\mathcal{N}(n)}}{\sum}}
\def \sumstrangers{\underset {m\notin{\mathcal{N}(n)}}{\sum}}
\newcommand{\flf}{\mathbf{f}_n^\top   \mathbf{L} \mathbf{f}_m}
\def \PieClam \textsc{\large P\small I\small E\large C\small L\small A\small M}
\def \pclam \textsc{\large P\large C\small L\small A\small M}
\newtheorem{theorem}{Theorem}
\newtheorem{lemma}[theorem]{Lemma}
\newtheorem{claim}[theorem]{Claim}
\newtheorem{definition}[theorem]{Definition}
\newtheorem{remark}[theorem]{Remark}
\newcommand{\ip}[2]{\left\langle#1,#2\right\rangle}
\newcommand\norm[1]{\left\lVert#1\right\rVert}
\newcommand{\ron}[1]{{{\textcolor{red}{\textbf{[Ron:} {#1}\textbf{]}}}}}
\title{PieClam: A Universal Graph Autoencoder Based on Overlapping Inclusive and Exclusive Communities}
\author{Daniel Zilberg and Ron Levie\\
\\
\large{Faculty of Mathematics, Technion - Israel Institute of Technology}}
\date{}
\begin{document}

\maketitle

\begin{abstract}


We propose PieClam (Prior Inclusive Exclusive Cluster Affiliation Model): a probabilistic graph model for representing any graph as overlapping  generalized communities.  Our method can be interpreted as a graph autoencoder: nodes are embedded into a code space by an algorithm that maximizes the log-likelihood of the decoded graph, given the input graph. PieClam is a community affiliation model that extends well-known methods like BigClam in two main manners. First, instead of the decoder being defined via pairwise interactions between the nodes in the code space, we also incorporate a learned prior on the distribution of nodes in the code space, turning our method into a graph generative model. Secondly, we generalize the notion of communities by allowing not only sets of nodes with strong connectivity, which we call inclusive communities, but also sets of nodes with strong disconnection, which we call exclusive communities.  To model both types of communities, we propose a new type of decoder based the Lorentz inner product, which we prove to be much more expressive than standard decoders based on standard inner products or norm distances. By introducing a new graph similarity measure, that we call the log cut distance, we show that PieClam is a universal autoencoder, able to uniformly approximately reconstruct any graph. Our method is shown to obtain competitive performance in graph anomaly detection benchmarks.

\end{abstract}


\section{Introduction}

In recent years, considerable research has concentrated on graph representation learning, aiming to develop vector representations for graph entities, including nodes, edges, and subgraphs \cite{hamilton2017representation,GRL_survet2020}. In graph autoencoders, e.g.,  \cite{kipf2016variational,grover2019graphite,JMLR:v21:19-671,mehta2019stochastic}, the vertices of a graph are embedded in a code space, where edges are inferred from the locations of the vertices in this space. Encoding graphs into a standard space has a number of advantages. While different graphs can have different sizes and topology, the code space is fixed with a fixed dimension. This helps when  learning downstream tasks, where the representation of the graph in the code space can be processed. For example, in link prediction, one infers unknown edges by defining \cite{kipf2016variational} or learning \cite{kumar2020linkreview} a function that takes pairs of nodes in the code space and predicts if there is an edge between them. In anomaly detection, one defines \cite{ding2019dominant,fan2020anomalydae} or learns \cite{chen2020gaan,ding2021aegis},  a function that takes the representation of a node and its neighborhood and predicts if this node is normal or an anomaly. In graph and node classification, the graph is represented in the code space, and one learns a model that predicts from this representation the classes of the nodes or of the graph \cite{GCN,MPNN}. 

\paragraph{Our Contribution.} In this paper, we derive a new graph autoencoder from a statistical model of graphs. In our model, similarly to SBM \cite{nowicki2001estimation,lee2019review} or community-based statistical models  \cite{airoldi2008mixed,yang2013BigClam}, graphs are generated from a combination of intersecting communities/cliques. Namely, each node belongs to a different subset of a predefined set of communities, and the affiliations of the nodes to the different communities determine the probabilities of the edges of the graph. Here, the estimation of the community affiliations is seen as an encoder of graphs to a  \emph{community affiliation space}, and the computation of the corresponding edge probabilities is seen as a decoder. As opposed to past works, we consider two types of \emph{generalized communities}. First, standard \emph{inclusive communities}, where any two nodes in the same community are likely to be connected. Second, we propose \emph{exclusive communities}, where belonging to the same community reduces the probability of nodes being connected. For illustration, consider a social network of employers/recruiters and employees/job-seekers. Such a network has roughly bipartite components, where job-seekers do not tend to connect to other seekers, and employers do not connect to other employers, but employers and seekers of the same subsector tend to connect. Hence, inclusive communities for such a graph can correspond to job titles or subsectors, and exclusive communities can correspond to sets of job-seekers and sets of employees from the same sector. 

To formalize the above ideas, we propose the \emph{Prior Inclusive Exclusive Cluster Affiliation Model (PieClam)}. This model represents graphs as overlapping inclusive and exclusive communities. Moreover, instead of embedding the nodes of a given graph in the community code space, our model also learns a prior on the code space, so new graphs can be generated from the learned model. This makes PieClam a graph generative model, like, e.g.,  \cite{kipf2016variational,grover2019graphite,JMLR:v21:19-671,mehta2019stochastic,sun2019vgraph}. 

To model both types of communities, we propose a new type of decoder based on the \emph{Lorentz inner product}, in which case the code space is typically called a \emph{pseudo Euclidean space} \cite{greub1963pseudo}. The addition of exclusive communities in our model is not merely aimed at improving it heuristically for special graphs like social networks. Rather, we prove in Theorems \ref{thm:IEUniversality} and \ref{thm:IEUniversality2} that using exclusive communities (via the Lorentz inner product) makes our model universal, namely, able to approximate with a fixed budget of parameters any graph. This is in contrast to standard decoders based on only inclusive communities, which we show are unable to represent many graphs. 

To formalize this universality property, we propose a new similarity measure between graphs with edge probabilities, that we call the \emph{log cut distance}. We formalize the universality of our model as follows: one can choose the dimension of the code space (the number of communities)  a priori, and guarantee that any graph of any size and topology can be approximated up to small log cut distance by decoding points from this fixed space. We show that other related decoders do not satisfy this universality property. 

In Section \ref{Experiments} we support our construction with experiments.
We first conduct some toy experiments that illustrate the merits of our method.
We then use PieClam to perform graph anomaly detection. Here, PieClam learns a probabilistic model corresponding to a given graph, and this model can be directly used for inspecting the probabilities of different nodes in this graph. Nodes with low probabilities are deemed to be anomalies. Our models achieve competitive performance with respect to state of the art anomaly detection methods.

Appendix \ref{Extended Related Work} offers
an extended discussion on related work.

\section{Community Affiliation Models With Prior}

Our model PieClam is best understood as an extension of the BigClam model \cite{Overlapping2012}. Hence, after introducing basic notations, we start with a detailed exposition of BigClam, before introducing our novel constructions.

\subsection{Notations}
\label{def:notations}
We denote by $\mathbb{R}$ the real line, and by $\mathbb{R}_+$ the non-negative real numbers. We denote ``and'' by $\wedge$.
We denote matrices as boldface capital letter $\mathbf{B}=\{b_{n,m}\}_{n,m}$, vectors as boldface lowercase letters $\mathbf{b}=\{b_n\}_n$,  and scalars as regular lowercase letters $b$. Vectors $\mathbf{b}\in\mathbb{R}^N$ are always column vectors, and their transpose $\mathbf{b}^{\top }$ row vectors. The rows of a matrix $\mathbf{B}\in\mathbb{R}^{N\times C}$ are denoted by the same letter in lowercase $\mathbf{b}_n^{\top }$, where $\mathbf{b}_n\in\mathbb{R}^C$, where we write in short $\mathbf{b}_n^{\top }\in\mathbb{R}^C$. A diagonal matrix $\mathbf{D}\in\mathbb{R}^{N\times N}$ with diagonal entries $\mathbf{d}$ is denoted by ${\rm diag}(\mathbf{d})={\rm diag}(d_1,\ldots, d_N)$.  The $\ell^2$ norm of a vector $\mathbf{b} \in \mathbb{R}^N$ is defined to be $\|\mathbf{b}\| = (\sum_{n=1}^Nb_n^2)^{1/2}$. 

A graph is denoted by $G=([N],E,\mathbf{A})$, where $[N]=\{1,\ldots, N\}$ is the set of $N$ nodes,  $E\subseteq [N]\times[N]$ is the set of edges, and $\mathbf{A}\in \{0,1\}^{N\times N}$ is the adjacency matrix. For weighted graphs, $\mathbf{A}\in[0,1]^{N\times N}$, where $a_{n,m}$ is the edge weight of $(n,m)$.
In this work we focus on undirected graphs, for which $(m,n)\in E \Leftrightarrow (n,m)\in E$, and $\mathbf{A}=\mathbf{A}^{\top }$.  
Any pair $(n, m)\in [N]\times[N]$ is called a \emph{dyad}.
The neighborhood of a node $n\in[N]$ is  $\mathcal{N}(n)  = \{m \in [N] \mid (m,n) \in E\}$.
 A graph-signal is a graph with node features  $G=([N],E,\mathbf{A},\mathbf{X})$ where $\mathbf{X}=\{\mathbf{x}_n^{\top }\}_{n=1}^N\in \mathbb{R}^{N\times D}$, $\mathbf{x}_n\in\mathbb{R}^D$, and $D$ is called the feature dimension. 
A random graph is a graph-valued random variable. Given a random graph with nodes $[N]$, 
 we denote the event in which  $(n,m)\in E$ by  $n\sim m$, and the event $(n,m)\notin E$ by $\neg (n\sim m)$.  
A graph is bipartite if its vertex set $[N]$ can be partitioned into two disjoint sets $\mathcal{U}$ and $\mathcal{V}$, with  $\mathcal{U}\cup\mathcal{V}=[N]$, such that every edge has one endpoint in $\mathcal{U}$ and the other in $\mathcal{V}$.

\subsection{BigClam}
\label{sec:BigClam}

 The code space in BigClam is $\mathbb{R}_+^C$, where each axis is interpreted as a community.  Each entry $f^c$ of a point $\mathbf{f}=(f^1,\ldots,f^C)\in\mathbb{R}_+^C$ is interpreted as how much the point belongs to community $c$, where $0$ means ``not included in $c$'' and $1$ ``included.'' In this paper we  call $\mathbb{R}_+^C$ \emph{the affiliation space (AS)}, and call any point in the affiliation space an \emph{affiliation feature (AF)}.

BigClam is a model where a simple random graph is decoded from a sequence of AFs $\mathbf{F}= \{\mathbf{f}_n=(f_n^1,\ldots,f_n^C)\}_{n=1}^N$ in the AS. For each pair of nodes $n,m\in[N]$, the probability of the event $\neg (n\sim m)$ (no edge between $n$ and $m$), given their amount of membership in the same community $c$, is defined to be
\[
P(\neg (n\sim m)|f^c_n, f^c_m) = e^{-f^c_n f^c_m}.
\]
BigClam makes two assumptions of independence. First, the membership in one community does not affect the membership in another community. Hence,
\begin{equation}
\label{eq:ClamEdge}
    P(\neg (n\sim m))|\mathbf{F})=P(\neg (n\sim m))|\mathbf{f}_n, \mathbf{f}_m)=e^{-\mathbf{f}_n^{\top }\mathbf{f}_m},
\end{equation}
\[P(n\sim m)|\mathbf{F})=P(n\sim m|\mathbf{f}_n, \mathbf{f}_m)=1-e^{-\mathbf{f}_n^{\top }\mathbf{f}_m}.\]
Due to this formula, BigClam is a so called \emph{Bernoulli-Poisson model} 
(see Appendix \ref{Stochastic block models}  for more details).
The second assumption is that the events of having and edge between different pairs of nodes are independent.  As a result, the probability of the entire graph $([N],E)$ conditioned on the AFs $\mathbf{F}$ is

\begingroup
\begin{equation}
\begin{aligned}
     P(E|\mathbf{F}) =
     \prod_{n \in [N]}\Bigg(
        \sqrt{\prod_{m \in \mathcal{N}(n)}P( n\sim m|\mathbf{F})\prod_{m \notin \mathcal{N}(n)} P(\neg (n\sim m)|\mathbf{F})}\Bigg)
\end{aligned}
\label{eqn:clam_prob}
\end{equation}
\endgroup
Here, the square root is taken because the product considers each edge twice. 

BigClam consist of a decoder, decoding from the AFs $\mathbf{F}$ the random graph $G(\mathbf{F})$ with node set $[N]$ and independent Bernoulli distributed edges with probabilities $\mathbf{P}=\{P(n\sim m|\mathbf{f}_n, \mathbf{f}_m)\}_{n,m=1}^N$.

From the encoding side, given a simple graph $([N],E)$, BigClam encodes the graph into the affiliation space by maximizing the log likelihood with respect to the AFs $\mathbf{F}$
\begingroup
\begin{align}
\label{eqn:naiveloss}
     l(\mathbf{F}) &= \frac{1}{2}\sumgraph \Big(\sumneighbors{}{\log(1-e^{-\mathbf{f}_n^{\top } \mathbf{f}_m})} - 
 \sumstrangers{\mathbf{f}_n^{\top } \mathbf{f}_m}\Big).
 \end{align}
 \endgroup
BigClam is optimized by gradient descent, with update at iteration $i$
    \begin{align}
    \label{eqn:vanilla_optimization}
    \mathbf{F}^{(i+1)} = \mathbf{F}^{(i)} + \delta\nabla_{\mathbf{F}}l({\mathbf{F}}^{(i)}),
    \end{align}
for some learning rate $\delta>0$.
In order to implement the above iteration with $O(|E|)$ operations at each step, instead of $O(N^2)$, the loss can be rearranged as
\begin{equation}
\begin{aligned}
\label{eqn:comp_loss}
 2l(\mathbf{F})=
    \sumgraphu \Big(\sumneighbors{\log(e^{\mathbf{f}_n^{\top } \mathbf{f}_n} - 1)}  - \mathbf{f}_n^\top {\sumgraph{\mathbf{f}_m}} + \|\mathbf{f}_n\|^2\Big).
\end{aligned}
\end{equation}
The gradient of the loss is now
\begin{align}
\label{eqn:BCupdate}
    \nabla_{\mathbf{f}_n}l = 
    \sumneighbors{\mathbf{f}_m \big(1-e^{-\mathbf{f}_n^\top  \mathbf{f}_m}
    \big)^{-1}} - \sumgraph{\mathbf{f}_m} + \mathbf{f}_n .
\end{align}
Since the global term needs to be calculated only once, the number of operations is  $O(|E|)$ instead of $O(N^2)$.  

We observe that the optimization process is a message passing scheme.
Looking at the dynamics of the optimization process, we see that every node is pushed in the direction of a weighted average of all of its neighbors, and pushed in the opposite direction by an average of all of the nodes. The sum of both forces tends to drive communities towards the axes in the optimization dynamics.

\subsection{Inclusive-Exclusive Cluster Affiliation Model}

The BigClam decoder has a limitation due to a  ``triangle inequality type'' behavior. Namely, suppose that we would like to construct two features $\mathbf{f}_1$ and $\mathbf{f_2}$ with strong connectivity to a third feature $\mathbf{f}_3$, and  we are limited by a fixed affiliation feature dimension $C$. A naive approach to achieve this would be to put $\mathbf{f}_1$ and $\mathbf{f_2}$ close to $\mathbf{f}_3$ so they have large inner products $\mathbf{f}_1^{\top }\mathbf{f}_3$ and $\mathbf{f}_2^{\top }\mathbf{f}_3$ . This would mean that $\mathbf{f}_1^{\top }\mathbf{f}_2$ would also be large, so $\mathbf{f}_1$ would be strongly connected to  $\mathbf{f}_2$ under the BigClam model. However, some graphs, like bipartite graphs, do not exhibit this triangle inequality type behavior. For a rigorous treatment, see Section \ref{BigClam is Not Universal} and Appendix \ref{Proof That BigClam Is Not Universal}.  Next, we build the IeClam decoder, that allows decoding any graph, that may have bipartite components without being limited by a  triangle inequality-type behavior. 

Our \emph{Inclusive Exclusive Cluster Affiliation Model (IeClam)} can be extended from BigClam by replacing the inner product in the non-edge probability (\ref{eq:ClamEdge}) by the more expressive \emph{Lorentz inner product}, which does not enforce a triangle inequality-type behavior. 
For that, we extend the affiliation space (AS) to be $\mathbb{R}^{2C}$, with two types of communities. The first $C$ axes are called the \emph{inclusive communities}, and their corresponding features are called \emph{inclusive affiliation features (IAF)}, denoted by $\mathbf{t}\in\mathbb{R}^C$. The last $C$ axes are called the \emph{exclusive communities}, with \emph{exclusive affiliation features (EAF)}, denoted by $\mathbf{s} \in \mathbb{R}^C$.\footnote{More generally, one can define a different number of inclusive and exclusive communities.} We define the concatenated affiliation feature by $\mathbf{f} = (\mathbf{t},\mathbf{s})\in\mathbb{R}^{2C}$. Given a sequence of affiliation features  $\mathbf{F}=\{\mathbf{f}_n\}_{n=1}^N\in\mathbb{R}^{N\times 2C}$, IeClam defines the probability of a single edge by
\begin{equation}
\begin{split}
    P(n\sim m| \mathbf{f}_n,\mathbf{f}_m) & = 1-\exp\big({-\mathbf{t}_n^{\top } \mathbf{t}_m +  \mathbf{s}_n^{\top } \mathbf{s}_m}\big) \\  
   &  = 1-\exp\big({-\mathbf{f}_n^{\top } \mathbf{L} \mathbf{f}_m}\big),
    \label{eq:LorFeature}
    \end{split}
\end{equation}
where $\mathbf{L} = {\rm diag}(1,...1,-1,...-1)$. The bilinear form $(\mathbf{u},\mathbf{v})\mapsto \mathbf{u}^\top  \mathbf{L} \mathbf{v}$ is called the \emph{Lorentz inner product}, and has its roots in special relativity. See Appendix \ref{The Lorentz Inner Product} for more details on the Lorentz inner product. 

Note that $\mathbf{L}$ is not positive-definite, so it does not actually define an inner product. Moreover, $\mathbf{f}_n^\top  \mathbf{L} \mathbf{f}_m$ can be negative even if $\mathbf{f}_n,\mathbf{f}_m\in \mathbb{R}_+^C$. To guarantee that (\ref{eq:LorFeature}) defines a proper probability between $0$ and $1$, we limit the affiliation space as follows.  
\begin{definition}
  A \emph{cone of non-negativity} is a  subset $\mathcal{C}$ of $\mathbb{R}^{2C}$ such that for every $\mathbf{f},\mathbf{g}\in \mathcal{C}$ we have 
$\mathbf{f}^\top  \mathbf{L} \mathbf{g} \geq 0$.  
\end{definition}
If we limit the affiliation space to be a cone of non-negativity, then IeClam gives well defined probabilities in $[0,1]$.
In our experiments, we restrict ourselves to the following simple construction of a cone of non-negativity, noting that it is not the only possible construction. 
%
\begin{definition}
 The \emph{pairwise cone} $\mathcal{T}$ is defined to be the set of affiliation features $\mathbf{f}=(\mathbf{t},\mathbf{s})\in\mathbb{R}^{2C}$ such that for every $c\in[C]$ we have $-t^c \leq s^c\leq t^c$.    
\end{definition}
It is easy to see that $\mathcal{T}$ is a cone of non-negativity. Indeed, for any $\mathbf{f}_1=(\mathbf{t}_1,\mathbf{s}_1), \mathbf{f}_2=(\mathbf{t}_2,\mathbf{s}_2) \in \mathcal{T}$,  we have  
 \[\forall c\in[C]: \quad  t_1^c t_2^c - s_1^c s_2^c \geq 0,\] so
\[\mathbf{f}_1^\top  \mathbf{L} \mathbf{f}_2 = \sum_{c=1}^C t_1^c t_2^c - s_1^c s_2^c \geq 0.\] 
As opposed to the BigClam decoder, the IeClam model can approximate a bipartite graph with a small number of communities. Namely, a bipartite graph with two (disjoint) sides $\mathcal{U},\mathcal{V}\subset[N]$ and constant probability for edges between $\mathcal{U}$ and $\mathcal{V}$ can be represented using $C=1$. Here,  all of the nodes in $\mathcal{U}$ are encoded to $(a,a)$,  and all of the nodes of $\mathcal{V}$ are encoded to $(a,-a)$, for some $a\geq 0$. This gives zero probability for edges within each part, and probability $1- e^{-2a^2}$ for edges between the parts. 

\begin{remark}
    The above construction gives an interpretation for each pair of axes $(t^c,s^c)$ in $\mathcal{T}$ as a \emph{generalized community} which can model anything between a clique and a bipartite component.
\end{remark}

Given AFs $\mathbf{F}$ in a cone, IeClam is seen as a decoder, by decoding $\mathbf{F}$ into the random graph $G(\mathbf{F})$ with node set $[N]$ and independent Bernoulli distributed edges with probabilities $\mathbf{P}=\{P(n\sim m|\mathbf{f}_n, \mathbf{f}_m)\}_{n,m=1}^N$.

For affiliation features $\mathbf{F}\in \mathcal{T}^N$, the probability of the graph $([N],E)$ given $\mathbf{F}$ of IeClam is
\begin{equation}
\begin{aligned}
   &  P(E|\mathbf{F}) \\
   & = \prod_{n \in [N]}\Bigg(
        \sqrt{\prod_{m \in \mathcal{N}(n)}(1-e^{-\mathbf{f}_n^\top  \mathbf{L} \mathbf{f}_m})\prod_{m \notin \mathcal{N}(n)} e^{-\mathbf{f}_n^\top  \mathbf{L} \mathbf{f}_m}}\Bigg).
\end{aligned}
\label{eqn:IeClam_prob}
\end{equation}
Like BigClam, IeClam is optimized by maximizing the log likelihood with gradient descent
\begin{equation}
\begin{aligned}
   &  2 l(\mathbf{F}) =
    \\ &\sumgraphu \Big(\sumneighbors{}\log(1 - e^{-{\mathbf{f}_n}^\top  \mathbf{L} \mathbf{f}_m})  -   \sumstrangers{}  \mathbf{f}_n ^\top  \mathbf{L} \mathbf{f}_m\Big)
    \end{aligned}
\label{eqn:IeClam_loss}
\end{equation}
This loss can be efficiently implemented on sparse graph by the formulation
\begingroup
\begin{equation}
\begin{aligned}
\label{eqn:loss_reparam_ie}
   &  2l(\mathbf{F})=\\
   & \sumgraphu \Big(\sumneighbors{\log(e^{\flf} - 1)}  - \mathbf{f}_n^\top  \mathbf{L}{\sumgraph{\mathbf{f}_m}} + \mathbf{f}_n^\top  \mathbf{L}\mathbf{f}_n\Big).
\end{aligned}
\end{equation}
\endgroup
The gradient of the loss for node $n$ is
\begin{align}
\label{eqn:BCupdate_ie}
    \nabla_{\mathbf{f}_n}l = 
   \mathbf{L}\bigg( \sumneighbors{\mathbf{f}_m \big(1-e^{-\flf}
    \big)^{-1}} - \sumgraph{\mathbf{f}_m} + \mathbf{f}_n \bigg).
\end{align}
Notice that all of the calculations are the same as BigClam, up to replacing the dot product by the Lorenz inner product.

\subsection{Community Affiliation Models With Prior}

BigClam and IeClam are not  generative graph models. Indeed, these methods only fit a conditional probability of the graph, conditioned on the AF values, but the methods do not learn the probability of the AFs over the affiliation space. Hence, the total probability of $E\wedge\mathbf{F}$ is not defined. 
To extend IeClam (and similarly BigClam) into probabilistic generative models, we define a prior probability distribution over the affiliation cone space $\mathcal{C}\subset \mathbb{R}^{2C}$, with probability density function $p:\mathbb{R}^{2C}\rightarrow [0,\infty)$ supported on $\mathcal{C}$. Using Bayes law, we now obtain the joint probability $P(E \wedge \mathbf{F})$ of the edges and community affiliation features  via the probability density function
\begin{align*}
    p(E ,\mathbf{F}) = P(E|\mathbf{F}) p(\mathbf{F}).
\end{align*}
We assume that the prior probabilities of all nodes are independent, namely,  
\[p(\mathbf{F}) = \underset{n\in [N]}{\prod} p(\mathbf{f}_n).\]
 Hence, the probability densities that a dyad $(n,m)$ is an edge or non-edge are
\[p(n\sim m, \mathbf{f_n}, \mathbf{f_m}) = p(\mathbf{f}_n)p(\mathbf{f}_m)(1 - e^{-{\mathbf{f}_n}^\top  \mathbf{L} {\mathbf{f}_m}}),\]
\[p(\neg(n\sim m), \mathbf{f_n}, \mathbf{f_m}) = p(\mathbf{f}_n)p(\mathbf{f}_m)e^{-{\mathbf{f}_n}^\top  \mathbf{L} {\mathbf{f}_m}}.\]
As before, we assume that the probabilities of different edges are independent, which gives
\begin{equation}
\begin{aligned}
\label{eqn:likelihood_prior}
    p(E, \mathbf{F}) =  \prod_{n \in [N]} p(\mathbf{f}_n)
        \sqrt{\prod_{m \in \mathcal{N}(n)}P(n\sim m|\mathbf{F}_m)\prod_{m \notin \mathcal{N}(n)}{P(\neg(n\sim m)|\mathbf{f}_n, \mathbf{f}_m)}
        }. 
 \end{aligned}
 \end{equation}
Now, the log likelihood loss is

\begin{equation}
\begin{aligned}
\label{eqn:loss_reparam_pie}
    l(\mathbf{F})= \sumgraphu \Bigg(\log(p(\mathbf{f}_n)) +
    \quad  \frac{1}{2} \Big(\sumneighbors{\log(e^{\flf} - 1)}  -  \mathbf{f}_n^\top  \mathbf{L}{\sumgraph{\mathbf{f}_m}} + \mathbf{f}_n^\top  \mathbf{L}\mathbf{f}_n \Big) \Bigg).
\end{aligned}
\end{equation}

We call this extension of IeClam PieClam (Prior Inclusive Exclusive Cluster Affiliation Model). We similarly extend BigClam to PClam (Prior Cluster Affiliation Model) by replacing $\mathbf{L}$ in (\ref{eqn:loss_reparam_pie}) with the identity matrix. 

Observe that the PieClam loss is similar to the IeClam loss only, with the addition of the prior, acting as a per node regularization term. The prior attracts all nodes to areas of higher probability during the optimization dynamics.

In order to sample from the above generative models, we first sample features $\{\mathbf{f}_n\}_{n\in [N]}$ according to $p$, and then connect them using either the BigClam or IeClam conditional probability.
To model the prior in practice, we use \emph{realNVP},  which is a \emph{normalizing flow} neural network model \cite{dinh2016density}. For more details on normalizing flows, see Appendix \ref{extended_sec:normflows}.

\paragraph{PieClam for graphs with node features.}

So far, we have used only the topology of the graph, not considering node features. 
We extend PieClam (and PClam) to graph-signals $([N], E, \mathbf{X})$ as follows. Concatenate the feature space of $\mathbf{X}$ with the affiliation space, and learn the prior on this combined space. This only affects the prior $p$. The conditional edge probabilities are  defined only in terms of the affiliation features, as before.

\section{Universality of PieClam and IeClam}

In this section, we define the universality of graph autoencoders, and prove that IeClam and PieClam are universal, while BigClam and PClam are not. The motivation behind the universality definition is that we would like to uniformly choose the dimension of the code space, such that every graph can be approximated by decoding some points in this fixed code space. Namely, we would like one \emph{universal} decoder that works for all graph, as opposed to choosing the dimension of the code space depending on the graph.

\subsection{General Graph Autoencoders}

Next, we define a general decoder that defines edge probabilities  by operating on pairs of points in a code space. 

\begin{definition}
   A \emph{pairwaise decoder} over the code spaces $\mathbb{R}^M$ is a mapping $D_M:\mathbb{R}^{2M}\rightarrow[0,1]$. Given $N$ points in the code space $\mathbf{z}=\{z_n\in\mathbb{R}^M\}_{n=1}^N$, the decoded graph $G_N(\mathbf{z})$ is the weighted graph with adjacency matrix  
\[\mathbf{D}_M(\mathbf{z})=\big(D_M(z_n,z_k)\big)_{n,k=1}^N.\]
\end{definition}

Clam models are special cases of pairwise decoders.

\subsection{Log Cut Distance}

Our definition of universality has the following form: for every error tolerance $\epsilon>0$, there is a choice of the dimension $M(\epsilon)$ of the code space such that every graph can be approximated up to error $\epsilon$ by decoding some points in this space. To formalize the ``up to error $\epsilon$'' statement, we present in this section a new graph similarity measure which we call the log cut distance.
Our construction is based on a well-known graph similarity measure called the cut norm.

\begin{definition}
  The \emph{cut norm} of a matrix $\mathbf{X}\in\mathbb{R}^{N\times N}$ is defined to be
    \begin{equation}
    \label{eq:GraphCutNorm0}  \|\mathbf{X}\|_{\square}:=\frac{1}{N^2}\sup_{\mathcal{U},\mathcal{V}\subset[N]}\Big|\sum_{i\in \mathcal{U}}\sum_{j\in \mathcal{V}}x_{i,j}\Big|.
\end{equation}  
\end{definition}
The \emph{cut metric} $\|\mathbf{A}-\mathbf{B} \|_{\square}$ between two adjacency matrices $\mathbf{A}$ and $\mathbf{B}$ is interpreted as the difference between the edge densities of $\mathbf{A}$ and $\mathbf{B}$ on the block $\mathcal{U}\times\mathcal{V}$ on which their edge densities are the most different.

The following graph similarity measure modifies the cut norm, making it appropriate for graphs with random edges over a fixed node set.
\begin{definition}
\label{def:KL1}
Given two random graphs over the nodes set $[N]$, with independently Bernoulli distributed edges, with probabilities $\mathbf{P}=\{p_{n,m}\}_{n,m\in[N]}$ and $\mathbf{Q}=\{q_{n,m}\}_{n,m\in[N]}$ respectively, their \emph{log cut distance} is defined to be 
\begin{equation}
    \begin{split}
        D_{\square}(\mathbf{P} || \mathbf{Q}):=  \inf_{0< e,d\leq 1} \Bigg( e+d+ \frac{1}{N^2}\sup_{\mathcal{U},\mathcal{V}\subset[N]}\Big|\log\Big(\prod_{n\in \mathcal{U}}\prod_{m\in \mathcal{V}}
\frac{1-(1-e)p_{n,m}}{1-(1-d)q_{n,m}}\Big)\Big|\Bigg).
    \end{split}
    \label{eq:KL1}
\end{equation}
\end{definition}
The second term in (\ref{eq:KL1}) is the cut distance $\| \tilde{\mathbf{P}}-\tilde{\mathbf{Q}}\|_{\square}$ between the matrix $\tilde{\mathbf{P}}$ with entries 
\[\tilde{p}_{n,m}=-\log(1-(1-e)p_{n,m})\]
and the matrix $\tilde{\mathbf{Q}}$
 with entries 
 \[\tilde{q}_{n,m}=-\log(1-(1-d)q_{n,m}).\] Namely, the cut distance between the log likelihoods of non-edges. The parameters $e$ and $d$ make the $[0,1]$-valued probabilities valid inputs to the log. The goal of $e,d$ is to regularize the probability of the edges, where higher regularization is penalized via the additive term $e+d$ in (\ref{eq:KL1}).

For each choice of a cut $\mathcal{U},\mathcal{V}\subset[N]$,  the term
 \begin{equation}
 \label{eq:KL2}
     \frac{1}{N^2}\log\Big(\prod_{n\in \mathcal{U}}\prod_{m\in \mathcal{V}}
\frac{1-(1-e)p_{n,m}}{1-(1-d)q_{n,m}}\Big)
 \end{equation}
is somewhat similar in structure to an un-normalized KL divergence, or distance of log likelihoods, between the non-edge probabilies of the graphs $\mathbf{P}$ and $\mathbf{Q}$ over the dyads between  $\mathcal{U}$ and $\mathcal{V}$. Here, ``un-normalized'' means that the dyads are drawn  uniformly with probabilities $1/N^2$, but the sum of probabilities is $|\mathcal{U}|\cdot|\mathcal{V}|/N^2$ and not 1. The un-normalized uniform distribution discourages the supremum inside the definition of $D_{\square}$ from choosing small blocks for maximizing (\ref{eq:KL2}). Note that normalized uniform distributions would lead $D_{\square}$ to choose  small blocks, which do not reflect meaningful empirical estimates of the edge statistics (the edge densities of small blocks would not be interpretable as expected number of edges). To conclude, $D_{\square}(\mathbf{P} || \mathbf{Q})$ is interpreted as the maximal divergence between $\mathbf{P}$ and $\mathbf{Q}$ over all blocks, up to the best regularizers $e,d$.

In our analysis we compute the log cut distance between the random decoded graph $\mathbf{P}$ and the deterministic target graph $\mathbf{A}$. While $\mathbf{A}$ has edge probabilities in $\{0,1\}$, $\mathbf{P}$ has edge probabilities in $[0,1)$ for Clam models. Therefore, we only require regularization  for $\mathbf{A}$. We hence consider the following modified version of Definition \ref{def:KL1}.
\begin{definition}
\label{def:KL2}
Given an unweighted graph with adjacency matrix $\mathbf{A}\in\{0,1\}^{N\times N}$ and a random graph over the nodes set $[N]$, with independently Bernoulli distributed edges with probabilities $\mathbf{P}=\{0\leq p_{n,m}<1\}_{n,m\in[N]}$, the log cut distance between $\mathbf{P}$ and $\mathbf{A}$ is defined to be
    \[
    D_{\square}(\mathbf{P} || \mathbf{A}):= 
 \inf_{0< d\leq1} \Bigg( d+ \frac{1}{N^2}\sup_{\mathcal{U},\mathcal{V}\subset[N]}\Big|\log\Big(\prod_{n\in \mathcal{U}}\prod_{m\in \mathcal{V}}
\frac{1-p_{n,m}}{1-(1-d)a_{n,m}}\Big)\Big|\Bigg).
    \]
\end{definition}

Lastly, in case both $\mathbf{P}$ and $\mathbf{Q}$ are $[0,1)$-valued, a simple version of the log cut distance is (\ref{eq:KL1}) with the choice $e=d=0$, namely,
 \begin{equation}
\label{eq:KL3}
 D^0_{\square}(\mathbf{P} || \mathbf{Q}):=
         \frac{1}{N^2}\sup_{\mathcal{U},\mathcal{V}\subset[N]}\Big|\log\Big(\prod_{n\in \mathcal{U}}\prod_{m\in \mathcal{V}}
\frac{1-p_{n,m}}{1-q_{n,m}}\Big)\Big|.
    \end{equation}

\subsection{Universal Graph Autoencoders}

We are now ready to define the universality of general pairwise autoencoders. Motivated by the fact that a Clam autoencoder is actually a family of autoencoders, parameterized by the number of communities, we also define general pairwise decoders as families.

\begin{definition}
    A family of code spaces $\mathbb{R}^M$ and corresponding pairwaise decoders $D_M:\mathbb{R}^{2M}\rightarrow[0,1]$, parametrized by $M\in\mathbb{N}$, is called  \emph{universal} if for every $\epsilon>0$ there is $M\in\mathbb{N}$ (which depends only on $\epsilon$) such that for every $N\in\mathbb{N}$ and every graph  with adjacency matrix $\mathbf{A}$ and $N$ nodes there are $N$ points in the code space $\{z_n\in\mathbb{R}^M\}_{n=1}^N$ such that 
     \[D_{\square}(\mathbf{D}_M(\mathbf{z}) || \mathbf{A}) <\epsilon.\] 
\end{definition}

\subsection{BigClam and PClam are Not Universal}
\label{BigClam is Not Universal}

We now show that BigClam (and hence also PClam) is not a universal autoencoder since it cannot approximate bipartite graphs.
Consider the bipartite graph $\mathbf{B}$ with $N$ nodes at each part, and probability $1-e^{-a^2}$ for an edge between the two parts, and $0$ within each part. 
Since in this case all probabilities are less than 1, we can use (\ref{eq:KL3}) as the definition of the log cut distance. The analysis for Definition \ref{def:KL2} extends naturally.

Let $\mathbf{P}$ be a decoded BigClam graph. 
Our goal is to show that there is no way to make $D^0_{\square}(\mathbf{P} || \mathbf{Q})$ small by choosing the dimension $C$ uniformly with respect to $N$. In fact, we will show BigClam cannot approximate a bipartite graph at all.\footnote{In Appendix \ref{BigClam With No Self Loops Approximating Bipartites} we show that one can approximate a bipartite graph of $2N$ nodes using $C=N^2$ classes in BigClam if the model ignores self-loops.} 

\begin{claim}
\label{claim:Clam_bi}
    Under the above construction, 
    \[D^0_{\square}(\mathbf{P} || \mathbf{B}) \geq \frac{a^2}{16}.\]
    As a result, BigClam is not a universal autoencoder.
\end{claim}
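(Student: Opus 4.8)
# Proof Proposal for Claim~\ref{claim:Clam_bi}

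The plan is to exploit the ``triangle inequality type'' rigidity of the BigClam decoder: if a node $i$ on side $\mathcal{U}$ and a node $j$ on side $\mathcal{V}$ are strongly connected in $\mathbf{P}$, then $\mathbf{f}_i^\top \mathbf{f}_j$ is large, which forces the intra-side inner products $\mathbf{f}_i^\top \mathbf{f}_{i'}$ or $\mathbf{f}_j^\top \mathbf{f}_{j'}$ to be large for many pairs, creating spurious intra-side edge probabilities that the target bipartite graph $\mathbf{B}$ does not have. First I would set up notation: write $\mathbf{f}_n \in \mathbb{R}_+^C$ for the AF of node $n$, so $p_{n,m} = 1 - e^{-\mathbf{f}_n^\top \mathbf{f}_m}$ and hence $-\log(1-p_{n,m}) = \mathbf{f}_n^\top \mathbf{f}_m \geq 0$. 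The quantity inside the sup in \eqref{eq:KL3} for a cut $(\mathcal{U},\mathcal{V})$ is then exactly $\frac{1}{N^2}\big|\sum_{n\in\mathcal{U}}\sum_{m\in\mathcal{V}} \mathbf{f}_n^\top\mathbf{f}_m - \sum_{n\in\mathcal{U}}\sum_{m\in\mathcal{V}}(-\log(1-b_{n,m}))\big|$, where $b_{n,m} = 1-e^{-a^2}$ if $(n,m)$ crosses the bipartition and $0$ otherwise. So $D^0_{\square}(\mathbf{P}\|\mathbf{B}) = \|\tilde{\mathbf{P}} - \tilde{\mathbf{B}}\|_{\square}$ where $\tilde{p}_{n,m} = \mathbf{f}_n^\top\mathbf{f}_m$ and $\tilde{b}_{n,m} \in \{0, a^2\}$ according to the bipartition.

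The core of the argument is to show the cut norm of the error matrix is bounded below by $a^2/16$ using a cleverly chosen pair of cuts (or one cut and its complement structure). Let $\mathcal{U}_0,\mathcal{V}_0$ be the two true parts of the bipartite graph, each of size $N$. Consider the cut $(\mathcal{U}_0,\mathcal{U}_0)$: here $\tilde{\mathbf{B}}$ contributes $0$, so the error contribution is $\frac{1}{(2N)^2}\sum_{n,m\in\mathcal{U}_0}\mathbf{f}_n^\top\mathbf{f}_m = \frac{1}{4N^2}\big\|\sum_{n\in\mathcal{U}_0}\mathbf{f}_n\big\|^2$. Similarly for $(\mathcal{V}_0,\mathcal{V}_0)$ we get $\frac{1}{4N^2}\big\|\sum_{m\in\mathcal{V}_0}\mathbf{f}_m\big\|^2$. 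And for the cross cut $(\mathcal{U}_0,\mathcal{V}_0)$, the error is $\frac{1}{4N^2}\big|\big\langle\sum_{n\in\mathcal{U}_0}\mathbf{f}_n, \sum_{m\in\mathcal{V}_0}\mathbf{f}_m\big\rangle - N^2 a^2\big|$. Write $\mathbf{u} = \sum_{n\in\mathcal{U}_0}\mathbf{f}_n$ and $\mathbf{v} = \sum_{m\in\mathcal{V}_0}\mathbf{f}_m$. Then $D^0_\square(\mathbf{P}\|\mathbf{B}) \geq \frac{1}{4N^2}\max\big\{\|\mathbf{u}\|^2, \|\mathbf{v}\|^2, |\langle\mathbf{u},\mathbf{v}\rangle - N^2 a^2|\big\}$. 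By Cauchy--Schwarz, $\langle\mathbf{u},\mathbf{v}\rangle \leq \|\mathbf{u}\|\,\|\mathbf{v}\| \leq \max\{\|\mathbf{u}\|^2,\|\mathbf{v}\|^2\}$. So if $\max\{\|\mathbf{u}\|^2,\|\mathbf{v}\|^2\} \leq \tfrac{1}{2}N^2 a^2$, then $|\langle\mathbf{u},\mathbf{v}\rangle - N^2a^2| \geq N^2 a^2 - \tfrac12 N^2 a^2 = \tfrac12 N^2 a^2$, giving $D^0_\square \geq \frac{1}{4N^2}\cdot\tfrac12 N^2 a^2 = a^2/8$. Otherwise $\max\{\|\mathbf{u}\|^2,\|\mathbf{v}\|^2\} > \tfrac12 N^2 a^2$, giving $D^0_\square \geq \frac{1}{4N^2}\cdot\tfrac12 N^2 a^2 = a^2/8$. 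Either way $D^0_\square(\mathbf{P}\|\mathbf{B}) \geq a^2/8 \geq a^2/16$, and since this bound is independent of $C$ and $N$, BigClam cannot be universal.

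I expect the main subtlety to be whether the sup in \eqref{eq:KL3} genuinely allows $\mathcal{U} = \mathcal{V}$ (it does — the definition quantifies over arbitrary $\mathcal{U},\mathcal{V}\subset[N]$, not disjoint ones) and the handling of the absolute value, which is why I keep everything in terms of $|\langle\mathbf{u},\mathbf{v}\rangle - N^2a^2|$ rather than dropping signs prematurely. A secondary point is accounting for self-loops: the diagonal terms $\mathbf{f}_n^\top\mathbf{f}_n$ appear in the $(\mathcal{U}_0,\mathcal{U}_0)$ sum and only help (they are nonnegative and the target is $0$ there), so they do not hurt the lower bound — this is consistent with the footnote that the obstruction disappears precisely when self-loops are excluded. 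I would also note that the case split above is slightly wasteful (it actually gives $a^2/8$), leaving the looser constant $a^2/16$ as stated, presumably to absorb the minor discrepancy between Definition~\ref{def:KL2} (with the regularizer $d$) and the simplified \eqref{eq:KL3}; handling the $\inf_d$ term contributes an extra $d$ which can be made negligible, or one checks directly that $-\log(1-(1-d)a_{n,m}) = -\log(1 - (1-d)) = -\log d \to \infty$ as $d\to 0$ on true edges, so one optimizes $d$ and the resulting bound still exceeds $a^2/16$ for the given construction. The hard part will be making this last extension to Definition~\ref{def:KL2} fully rigorous while keeping the clean constant.
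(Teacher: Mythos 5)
Your proof is correct and follows essentially the same route as the paper's: both reduce $D^0_{\square}(\mathbf{P}\|\mathbf{B})$ to the cut norm of $\tilde{\mathbf{P}}-\tilde{\mathbf{B}}$, test it against the two within-side cuts and the cross cut, and work with the aggregated feature vectors $\mathbf{u}=\sum_{n\in\mathcal{U}_0}\mathbf{f}_n$ and $\mathbf{v}=\sum_{m\in\mathcal{V}_0}\mathbf{f}_m$ (the paper's $\mathbf{q},\mathbf{y}$). The only difference is the final combination step --- you take the maximum of the three lower bounds and split into cases via Cauchy--Schwarz, while the paper averages the cuts and completes the square to get $\tfrac{1}{16}\bigl(a^2+\|\mathbf{q}-\mathbf{y}\|^2\bigr)$; your version even yields the slightly stronger constant $a^2/8$.
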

The proof is given in Appendix \ref{Proof That BigClam Is Not Universal}.
We note that one can similarly show that BigClam is not universal also with respect to the log cut distance of Definition \ref{def:KL2}.

\subsection{Universality of IeClam and PieClam}
We are now ready to show that IeClam (and hence also PieClam) is a universal autoencoder. The proofs of the following two theorems are in Appendix \ref{Proof of the Universality of PieClam} and \ref{Proof of the Universality of IeClam in a Cone of Non-negativity}.

We give two versions for the universality result. The first is without a cone restriction, and requires a relatively small number of communities for the given error tolerance. The corresponding decoder produces edge weights that can be negative. The second theorem is restricted to the pairwise cone of non-negativity, and has pessimistic  asymptotics for the required number of communities given an error tolerance. This decoder is guaranteed to produce proper edge probabilities in $[0,1)$.

\begin{theorem}
\label{thm:IEUniversality}
    For every epsilon $\epsilon>0$, every $N\in\mathbb{N}$, and every adjacency matrix $\mathbf{A}\in[0,1]^{N\times N}$, there are $N$ affiliation features $\mathbf{F}\in \mathbb{R}^{2K}$ of dimension  $K=-9\log(\epsilon/2)^2/\epsilon^2$ such that the corresponding IeClam model $\mathbf{P}=\{P(n\sim m|\mathbf{f}_n, \mathbf{f}_m)\}_{n,m=1}^N$ satisfies 
     \[[D_{\square}(\mathbf{P} || \mathbf{A}) <\epsilon.\] 
    Here, the log cut distance is from Definition \ref{def:KL2}.
    As a result, 
    IeClam and PieClam are universal autoencoders with code space $\mathbb{R}^{2K}$.
\end{theorem}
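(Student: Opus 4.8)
The plan is to reduce the claim to the Frieze--Kannan weak regularity lemma, exploiting the key algebraic fact that the log-transformed IeClam weight matrix is exactly a \emph{Lorentzian Gram matrix} $\mathbf{F}\mathbf{L}\mathbf{F}^\top$.

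First I would set up the reduction. Write an affiliation feature matrix as $\mathbf{F}=[\mathbf{T}\,|\,\mathbf{S}]\in\mathbb{R}^{N\times 2K}$, with inclusive block $\mathbf{T}\in\mathbb{R}^{N\times K}$ and exclusive block $\mathbf{S}\in\mathbb{R}^{N\times K}$. By (\ref{eq:LorFeature}) the decoded probabilities satisfy $1-p_{n,m}=e^{-\mathbf{f}_n^\top\mathbf{L}\mathbf{f}_m}$, so the matrix $\tilde{\mathbf{P}}$ with entries $\tilde p_{n,m}:=-\log(1-p_{n,m})$ equals $\mathbf{F}\mathbf{L}\mathbf{F}^\top=\mathbf{T}\mathbf{T}^\top-\mathbf{S}\mathbf{S}^\top$. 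Hence, by Definition \ref{def:KL2},
\[ D_\square(\mathbf{P}\,\|\,\mathbf{A})=\inf_{0<d\le 1}\Big(d+\big\|\,\mathbf{F}\mathbf{L}\mathbf{F}^\top-\tilde{\mathbf{A}}_d\,\big\|_\square\Big),\qquad (\tilde{\mathbf{A}}_d)_{n,m}:=-\log\!\big(1-(1-d)a_{n,m}\big), \]
where $\|\cdot\|_\square$ is the normalized cut norm. (Since $1-p_{n,m}=e^{-x}>0$ always, $\tilde p_{n,m}$ is well defined even when $p_{n,m}<0$; this is the cost of not imposing a cone restriction, as flagged in the theorem statement.) So it suffices to fix one regularizer $d$ and approximate the symmetric bounded matrix $\tilde{\mathbf{A}}_d$ in cut norm, to within $\epsilon-d$, by a matrix of the form $\mathbf{T}\mathbf{T}^\top-\mathbf{S}\mathbf{S}^\top$ with $K$ columns in each block.

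Next I would invoke weak regularity. Choosing $d=\epsilon/2$ makes every entry of $\tilde{\mathbf{A}}_d$ lie in $[0,-\log(\epsilon/2)]$. Applying the Frieze--Kannan weak regularity lemma (in its cut-decomposition form) to $\tilde{\mathbf{A}}_d$ at accuracy $\epsilon/2$ produces $\mathbf{W}'=\sum_{i=1}^{s}c_i\mathbf{1}_{R_i}\mathbf{1}_{C_i}^\top$ with $\|\tilde{\mathbf{A}}_d-\mathbf{W}'\|_\square\le \epsilon/2$ and $s=O\big((\text{entry bound})^2/\epsilon^2\big)=O\big(\log(\epsilon/2)^2/\epsilon^2\big)$, a number independent of $N$; tracking the absolute constant (and the split of $\epsilon$ between $d$ and the Frieze--Kannan accuracy) yields $s\le K:=-9\log(\epsilon/2)^2/\epsilon^2$. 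Since $\tilde{\mathbf{A}}_d$ is symmetric and the cut norm is transpose-invariant, replacing $\mathbf{W}'$ by its symmetrization does not increase the error, so I may assume $\mathbf{W}'=\sum_i \tfrac{c_i}{2}\big(\mathbf{1}_{R_i}\mathbf{1}_{C_i}^\top+\mathbf{1}_{C_i}\mathbf{1}_{R_i}^\top\big)$. Using $\mathbf{u}\mathbf{v}^\top+\mathbf{v}\mathbf{u}^\top=\tfrac12\big((\mathbf{u}+\mathbf{v})(\mathbf{u}+\mathbf{v})^\top-(\mathbf{u}-\mathbf{v})(\mathbf{u}-\mathbf{v})^\top\big)$, each symmetrized cut becomes $\tfrac{c_i}{4}\big(\mathbf{p}_i\mathbf{p}_i^\top-\mathbf{q}_i\mathbf{q}_i^\top\big)$ with $\mathbf{p}_i=\mathbf{1}_{R_i}+\mathbf{1}_{C_i}$, $\mathbf{q}_i=\mathbf{1}_{R_i}-\mathbf{1}_{C_i}$; for $c_i>0$ I put $\tfrac{\sqrt{c_i}}{2}\mathbf{p}_i$ as a column of $\mathbf{T}$ and $\tfrac{\sqrt{c_i}}{2}\mathbf{q}_i$ as a column of $\mathbf{S}$, and for $c_i<0$ I swap the two blocks. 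Padding each block with zero columns up to width $K$ and setting $\mathbf{F}=[\mathbf{T}\,|\,\mathbf{S}]$ gives $\mathbf{F}\mathbf{L}\mathbf{F}^\top=\mathbf{T}\mathbf{T}^\top-\mathbf{S}\mathbf{S}^\top=\mathbf{W}'$ exactly, hence $D_\square(\mathbf{P}\,\|\,\mathbf{A})\le d+\|\mathbf{F}\mathbf{L}\mathbf{F}^\top-\tilde{\mathbf{A}}_d\|_\square\le \epsilon/2+\epsilon/2$; a slight shrinkage of the parameters upgrades this to the strict inequality $<\epsilon$. For PieClam, the decoded graph depends only on the IeClam conditional probabilities (the prior plays no role in decoding), so the same $\mathbf{F}$ works together with any density on $\mathbb{R}^{2K}$, giving universality of PieClam with the same code space $\mathbb{R}^{2K}$.

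I expect the only genuine obstacle to be quantitative rather than conceptual: optimizing the trade-off between the regularizer $d$ and the Frieze--Kannan accuracy and inserting the entry bound $-\log(\epsilon/2)$ so as to land exactly on the advertised $K=-9\log(\epsilon/2)^2/\epsilon^2$, together with the routine accounting that turns $s$ symmetrized cut matrices into precisely $K$ inclusive and $K$ exclusive affiliation coordinates. The conceptual point --- and the reason this works for IeClam but not for BigClam (cf.\ Claim \ref{claim:Clam_bi}) --- is that the indefinite form $\mathbf{L}$ lets $\tilde{\mathbf{P}}=\mathbf{F}\mathbf{L}\mathbf{F}^\top$ realize any symmetric matrix with at most $K$ positive and at most $K$ negative eigenvalues, whereas a decoder built from a genuine inner product can only produce a sum of rank-one positive semidefinite matrices and therefore cannot capture the ``negative'' spectral mass required by bipartite-type blocks.
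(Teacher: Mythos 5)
Your proposal is correct, and its overall strategy coincides with the paper's: regularize $\mathbf{A}$ with $d=\epsilon/2$, pass to the log-transformed matrix $\tilde{\mathbf{A}}$ with entries in $[0,-\log(\epsilon/2)]$, approximate it in cut norm by a low-complexity signed decomposition, and realize that decomposition exactly as $\mathbf{F}\mathbf{L}\mathbf{F}^\top=\mathbf{T}\mathbf{T}^\top-\mathbf{S}\mathbf{S}^\top$ by routing positive rank-one terms to the inclusive block and negative ones to the exclusive block. The difference is in which regularity lemma does the work. The paper invokes the intersecting-community (ICG) weak regularity lemma of \cite{Levie_intersecting_reg_24} (Theorem \ref{thm:reg_sprase}), which directly hands it a symmetric approximant $\mathbf{Q}\,{\rm diag}(\mathbf{r})\,\mathbf{Q}^{\top}=\sum_k r_k\mathbf{q}_k\mathbf{q}_k^{\top}$ with $K$ communities, so the split $\mathbf{r}=\mathbf{r}_+-\mathbf{r}_-$ immediately yields the affiliation features. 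You instead use the classical Frieze--Kannan cut decomposition into rectangles $c_i\mathbf{1}_{R_i}\mathbf{1}_{C_i}^{\top}$, then symmetrize and apply the polarization identity $\mathbf{u}\mathbf{v}^{\top}+\mathbf{v}\mathbf{u}^{\top}=\tfrac12\big((\mathbf{u}+\mathbf{v})(\mathbf{u}+\mathbf{v})^{\top}-(\mathbf{u}-\mathbf{v})(\mathbf{u}-\mathbf{v})^{\top}\big)$ to convert each cut into a difference of rank-one PSD terms. Your route is more self-contained (it needs only the textbook lemma) at the cost of the extra polarization step and a slightly looser grip on the absolute constant in $K$; the paper's route outsources exactly that step to the ICG lemma and reads off the constant $9R^2/(4\gamma^2)$ directly. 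Both correctly exploit the fact that the features need not lie in a cone of non-negativity here (that restriction is the subject of Theorem \ref{thm:IEUniversality2}), and your closing remark about why an honest inner product cannot absorb the negative spectral mass is precisely the content of Claim \ref{claim:Clam_bi}.
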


\begin{theorem}
\label{thm:IEUniversality2}
    For every epsilon $\epsilon>0$, every $N\in\mathbb{N}$, and every adjacency matrix $\mathbf{A}\in[0,1]^{N\times N}$, there are $N$ affiliation features $\mathbf{F}$ in the cone of pairwise non-negativity  $\mathcal{T}\subset \mathbb{R}^{2C}$ of dimension  $C=2^{4\lceil -\log(\epsilon/2)^2/\epsilon^2\rceil}$  such that the corresponding IeClam model $\mathbf{P}=\{P(n\sim m|\mathbf{f}_n, \mathbf{f}_m)\}_{n,m=1}^N$ satisfies 
     \[D_{\square}(\mathbf{P} || \mathbf{A}) <\epsilon.\] 
    Here, the log cut distance is from Definition \ref{def:KL2}.
    As a result, 
    IeClam and PieClam are universal autoencoders with code space $\mathcal{T}$.
\end{theorem}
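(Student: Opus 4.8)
The plan is to turn the universality statement into a cut-norm approximation problem and then solve it exactly by Lorentz--Gram features that are forced to lie in the pairwise cone $\mathcal{T}$. Since PieClam and IeClam share the same decoder $\mathbf{F}\mapsto\mathbf{P}$ (the learned prior enters only the encoding objective \eqref{eqn:loss_reparam_pie}, not the decoding map), it suffices to prove the claim for IeClam. For features $\mathbf{F}\in\mathcal{T}^N$, write $W_{n,m}:=\mathbf{f}_n^\top\mathbf{L}\mathbf{f}_m\ge 0$, so that $-\log(1-p_{n,m})=W_{n,m}$; then for any regularizer $0<d\le 1$,
\[
D_{\square}(\mathbf{P}\,\|\,\mathbf{A})\ \le\ d+\bigl\|W-\tilde{\mathbf{A}}_d\bigr\|_{\square},\qquad (\tilde{\mathbf{A}}_d)_{n,m}:=-\log\bigl(1-(1-d)a_{n,m}\bigr),
\]
where $\|\cdot\|_{\square}$ is the normalized cut norm of Definition~\ref{def:KL1}. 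I fix $d=\epsilon/2$; then $\tilde{\mathbf{A}}_d$ has all entries in $[0,L]$ with $L:=-\log(\epsilon/2)$, and it remains to build $W=\mathbf{F}\mathbf{L}\mathbf{F}^\top$, $\mathbf{F}\in\mathcal{T}^N$, with $\|W-\tilde{\mathbf{A}}_d\|_{\square}<\epsilon/2$.

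\emph{Step 1: reduce the target to a non-negative block matrix.} Apply the weak (Frieze--Kannan) regularity lemma to $\tilde{\mathbf{A}}_d/L\in[0,1]^{N\times N}$ with accuracy slightly below $\epsilon/(2L)$. This yields a partition of $[N]$ into $M\le 2^{O(\log^2(\epsilon/2)/\epsilon^2)}$ classes $V_1,\dots,V_M$ such that the block-average matrix $\overline{\tilde{\mathbf{A}}_d}$, constant equal to $R_{ij}$ on each $V_i\times V_j$, obeys $\|\tilde{\mathbf{A}}_d-\overline{\tilde{\mathbf{A}}_d}\|_{\square}<\epsilon/2$ for every $N$. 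The reduced matrix $\mathbf{R}=(R_{ij})\in\mathbb{R}^{M\times M}$ is symmetric, has entries in $[0,L]$, and, being a block-average of the non-negative matrix $\tilde{\mathbf{A}}_d$, is entrywise non-negative. (This non-negativity is essential: confining features to $\mathcal{T}$ makes it a cone of non-negativity, so $W$ must itself be entrywise non-negative; one therefore cannot keep a low-rank \emph{signed} cut approximant here, which is precisely why the community budget will be far larger than in Theorem~\ref{thm:IEUniversality}.)

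\emph{Step 2: realize $\mathbf{R}$ exactly by cone features, and conclude.} I claim every symmetric non-negative $\mathbf{R}\in\mathbb{R}^{M\times M}$ is the Lorentz--Gram matrix of vectors $\mathbf{g}_1,\dots,\mathbf{g}_M\in\mathcal{T}\subset\mathbb{R}^{2C}$ with $C=\binom{M}{2}+M$. Decompose $\mathbf{R}=\sum_i R_{ii}\,\mathbf{e}_i\mathbf{e}_i^\top+\sum_{i<j}R_{ij}\,(\mathbf{e}_i\mathbf{e}_j^\top+\mathbf{e}_j\mathbf{e}_i^\top)$ and devote one community (one axis pair) to each summand. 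For the $(i,i)$ summand use an inclusive community in which class $i$ gets the feature $(\sqrt{R_{ii}},0)$ and every other class gets $(0,0)$; its Lorentz contribution is $R_{ii}$ on $V_i\times V_i$ and $0$ elsewhere. For the $(i,j)$ summand ($i<j$) use a generalized community in which class $i$ gets $(a,a)$, class $j$ gets $(a,-a)$ with $a=\sqrt{R_{ij}/2}$, and every other class gets $(0,0)$; mimicking the bipartite computation given after the definition of $\mathcal{T}$, its contribution is $R_{ij}$ on $V_i\times V_j$ and on $V_j\times V_i$, and $0$ everywhere else (in particular $0$ on $V_i\times V_i$ and $V_j\times V_j$). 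Every per-community feature satisfies $-t^c\le s^c\le t^c$, so the concatenations $\mathbf{g}_i$ lie in $\mathcal{T}$, and summing the contributions reconstructs $\mathbf{R}$. Setting $\mathbf{f}_n:=\mathbf{g}_i$ for $n\in V_i$ gives $W=\mathbf{F}\mathbf{L}\mathbf{F}^\top=\overline{\tilde{\mathbf{A}}_d}$ exactly, so $\|W-\tilde{\mathbf{A}}_d\|_{\square}<\epsilon/2$ for every $N$ and hence $D_{\square}(\mathbf{P}\,\|\,\mathbf{A})\le d+\epsilon/2<\epsilon$; all decoded probabilities $1-e^{-W_{n,m}}$ lie in $[0,1)$ since $W_{n,m}\ge 0$; the community count is $C=\binom{M}{2}+M$ with $M=2^{O(\log^2(\epsilon/2)/\epsilon^2)}$, and tracking the explicit constant in the weak regularity lemma gives exactly $C=2^{4\lceil-\log(\epsilon/2)^2/\epsilon^2\rceil}$; finally, PieClam inherits universality from IeClam because the two share a decoder.

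\emph{Main obstacle.} The crux is Step~2 subject to the cone constraint. Cut-norm approximation is insensitive to the internal structure of $\mathbf{R}$, but membership in $\mathcal{T}$ only lets us compose elementary pieces that themselves stay in $\mathcal{T}$ --- a single inclusive ``clique'' axis or a single bipartite-type axis --- and each such piece affects only one entry (and its transpose) of $\mathbf{R}$, costing one community per nonzero entry. This $O(M^2)$ overhead, on top of the exponential block count $M=2^{O(\log^2\epsilon/\epsilon^2)}$ forced by Step~1 (which is itself forced by needing $W$ entrywise non-negative), is exactly what makes $C$ here so much larger than the $K=O(\log^2\epsilon/\epsilon^2)$ of Theorem~\ref{thm:IEUniversality}, where an unrestricted eigendecomposition $\mathbf{R}=\mathbf{T}\mathbf{T}^\top-\mathbf{S}\mathbf{S}^\top$ of a low-rank signed cut approximant is enough. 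A secondary point to verify is that applying the weak regularity lemma to $\tilde{\mathbf{A}}_d$ (rather than to $\mathbf{A}$ and transferring through $a\mapsto-\log(1-(1-d)a)$) introduces no hidden dependence on $N$; it does not, since the lemma's guarantees depend only on the entry range $[0,L]$ and on the chosen accuracy.
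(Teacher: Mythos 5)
Your proposal is correct and follows essentially the same route as the paper's proof: regularize with $d=\epsilon/2$, pass to $\tilde{\mathbf{A}}_d$, invoke the standard (disjoint-class) weak regularity lemma to get a non-negative block matrix, and then spend one generalized community per pair of classes, realized by the inclusive "clique" primitive on diagonal blocks and the $(a,a)/(a,-a)$ bipartite primitive on off-diagonal blocks, for a total of $O(M^2)=2^{4\lceil -\log(\epsilon/2)^2/\epsilon^2\rceil}$ communities. Your explicit per-pair sign assignment is in fact cleaner than the paper's channel-indexed formula (which as written assigns the same sign of $s^c$ to both classes sharing a channel), but the underlying construction and budget accounting are identical.
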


\section{Experiments}
\label{Experiments}

\subsection{Reconstructing Synthetic Priors}
\label{sec:reconstruction_prior}

We consider a ground-truth synthetic prior $p:\mathbb{R}^C\rightarrow[0,\infty)$ in PClam. 
We sample $N=500$ points from the prior, decode the corresponding PClam graph, and sample a simple graph from the random Bernoulli edges. Then, given these sampled graph, we fit to it a PClam and  model. In Figure~\ref{fig:prior_reconstruction} we compare the ground-truth prior to the reconstructed prior. We observe that even though our method is unsupervised, it still manages to capture the prior qualitatively well. More details are given in Appendix \ref{Extended Details on Experiments}.

\begin{figure}[H]
    \begin{minipage}{0.3\columnwidth}
        \includegraphics[width=\textwidth]{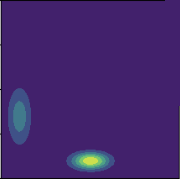}
        
    \end{minipage}\hfill
    \begin{minipage}{0.3\columnwidth}
        \includegraphics[width=\textwidth]{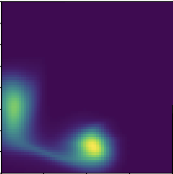}
        
    \end{minipage}\hfill
    \begin{minipage}{0.3\columnwidth}
        \includegraphics[width=\textwidth]{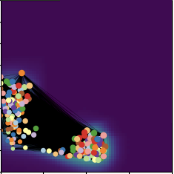}
        
    \end{minipage}
    \caption{Left to right: Synthetic prior in an affiliation space of two inclusive communities.  Reconstructed prior by PClam with normalizing flow.  Reconstructed affiliation features by PClam.}
    \label{fig:prior_reconstruction}
\end{figure}


\subsection{Reconstructing Synthetic SBMs}
\label{sec:reconstruction_sbm}

In Figures \ref{fig:sbm_PieClam} and \ref{fig:sbm_pclam} we consider a synthetic SBM, and sample simple graph with $N=210$ nodes from it.  We then fit A PClam and PieClam model to it. 
The SBM is not dominant diagonal, so it cannot be well approximated by the PClam model (which finds a nested community structure), while the PieClam model approximates it well qualitatively. 
  Additional details are given in Appendix \ref{Extended Details on Experiments}.

\begin{figure}[H]
    \begin{minipage}{0.29\columnwidth}
        \includegraphics[width=\textwidth]{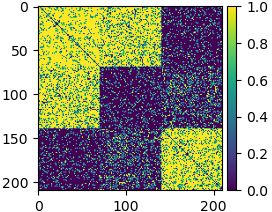}
        
    \end{minipage}\hfill
    \begin{minipage}{0.29\columnwidth}
        \includegraphics[width=\textwidth]{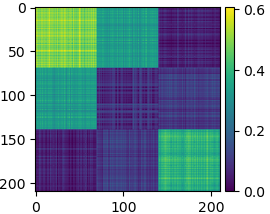}
        
    \end{minipage}\hfill
    \begin{minipage}{0.2\columnwidth}
        \includegraphics[width=\textwidth]{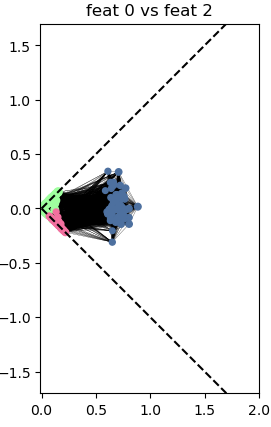}
        
    \end{minipage}
    \begin{minipage}{0.2\columnwidth}
        \includegraphics[width=\textwidth]{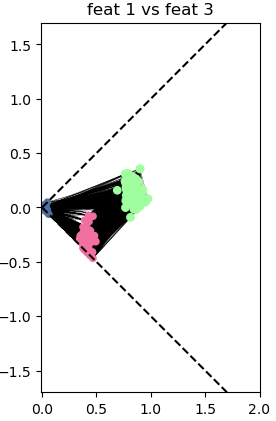}
        
    \end{minipage}
    \caption{Left to right:  Adjacency matrix sampled from SBM with three classes and 9 blocks.  Adjacency matrix of the fitted PieClam graph, with two inclusive and two exclusive communities.  Affiliation features of the PieClam matrix in $\mathcal{T}$ projected to $(t^1,s^1)$ and $(t^2,s^2)$.}
    \label{fig:sbm_PieClam}
\end{figure}

\begin{figure}[H]
    \begin{minipage}{0.3\columnwidth}
        \includegraphics[width=\textwidth]{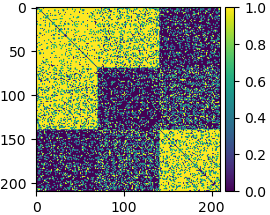}
        
    \end{minipage}\hfill
    \begin{minipage}{0.3\columnwidth}
        \includegraphics[width=\textwidth]{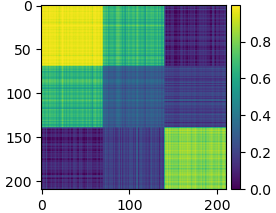}
        
    \end{minipage}\hfill
    \begin{minipage}{0.3\columnwidth}
        \includegraphics[width=\textwidth]{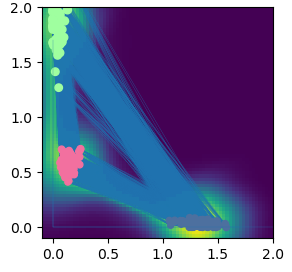}
        
    \end{minipage}
    \caption{Left to right: Adjacency matrix sampled from SBM. Fitted  PClam adjacency matrix based on two inclusive communities.  Affiliations features of the PCLam matrix.}
    \label{fig:sbm_pclam}
\end{figure}

\subsection{Anomaly Detection}

In unsupervised node anomaly detection, one is given a graph with node features, where some of the nodes are unknown anomalies. The goal is to detect these anomalous nodes, without supervising on any example of normal or anomalous nodes, by only using the structure of the graph and the node features.  
We use Clam models for node anomaly detection by fitting the Clam model to the graph and flagging nodes as anomalous if they satisfy the following different criteria.
\begin{itemize}
    \item 
    \textbf{(S)} \textbf{Star probability:} Given any Clam model, a node $n$ is called anomalous if $\prod_{m\in\mathcal{N}(n)}P(n\sim m| \mathbf{F})<\delta$. 
    \item 
    \textbf{(P)} \textbf{Prior probability:} Given any Clam model with prior, a node $n$ is called anomalous if $p(\mathbf{f}_n)<\delta$. 
    \item 
     \textbf{(PS)} \textbf{Prior star probability:} Given any Clam model with prior, a node $n$ is called anomalous if $p(\mathbf{f}_n)\prod_{m\in\mathcal{N}(n)}P(n\sim m| \mathbf{F})<\delta$. 
\end{itemize}
 We reduce the dimension of the node features of the input graphs to $100$ using truncated SVD, unless the dimension of the features is smaller than $100$ in which case we only normalize them to have zero mean and standard deviation one.   We use an affiliation space embedding dimension of 30 for $\mathbf{F}$ for PieClam and IeClam, and $24$ for BigClam.
Every Clam method starts with a random embedding $\mathbf{F}$.  Clam models with prior are trained with the following steps. 
 ($\mathbf{F}$-$t$): given a fixed prior $p$, optimize only $\mathbf{F}$ for $t$ steps. ($p$-$t$): given a fixed embedding $\mathbf{F}$, optimize only $p$ for $t$ steps. For regularization, in each iteration of of $p$ we add Gaussian noise with amplitude 0.01 to the affiliation features. We train PieClam with scheme $\mathbf{F}$-500 $\rightarrow$ $p$-1300 $\rightarrow$ $\mathbf{F}$-500 $p$-1300 with learning rate of $2e^{-6}$ on $\mathbf{F}$  and $1e^{-6}$ on $p$.  
For the models which only optimize $\mathbf{F}$ (IeClam and BigClam)  we use the following configurations: We train IeClam on 2500 iterations with learning rate 1e-6 using 30 communities. we train BIGClam on 2200 iterations with learning rate 1e-6.
More details on hyper-parameters are given in Appendix \ref{Extended Details on Experiments}.

In Table~\ref{Table1} we compare the performance of Clam methods to DOMINANT \cite{ding2019dominant}, AnomalyDAE \cite{fan2020anomalydae}, OCGNN \cite{wang2021ocgnn}, AEGIS \cite{ding2021aegis}, GAAN \cite{chen2020gaan} and TAM \cite{qiao2024tam} on the datasets Reddit, Elliptic, and Photo. The hyper-parameters of the competing methods are taken as the recommended values from the respective papers. The results are taken from Table 1 in \cite{qiao2024ggad}.
We observe that our methods are first and second place on all datasets. Moreover, S- IeClam, PS-PieClam and S BigClam each beats the competing methods in two out of the three datasets.        

\begin{table}[H]
\centering
\begin{tabular}{|c|c|c|c|}
    \hline
  Method  & Reddit& Elliptic& Photo\\
    \hline

 (S)- IeClam& \textbf{0.639}& 0.440&\textbf{0.592}\\
 (S) - PieClam& 0.610& 0.435&0.556\\
 (P) - PieClam& 0.567& \textbf{0.610}&0.425\\
 (PS) - PieClam& *0.612& \underline{0.551}&0.507\\
 (S) - BigClam & \underline{0.637}& 0.434&\underline{0.581}\\
 \hline
 DOMINANT  & 0.511& 0.296&0.514\\
 AnomalyDAE &0.509& *0.496&0.507\\
 OCGNN  &0.525 & 0.258&0.531\\
 AEGIS &0.535 & 0.455&0.552\\
 GAAN &0.522 &0.259 &0.430\\
 TAM &0.606 & 0.404 & *0.568\\
 \hline
\end{tabular}
\caption{Comparison of Clam anomaly detectors with competing methods. 
 First place in \textbf{boldface}, second with \underline{underline}, third with *star. We observe that our methods are first and second place on all datasets. Moreover, S- IeClam, PS-PieClam and S BigClam each beats the competing methods in two out of the three datasets. The accuracy metric is areas under curve (AUC).}
\label{Table1}
\end{table}

\section{Conclusion}

We introduced PieClam, a new probabilistic graph generative model. PieClam models graphs via embedding the nodes into an inclusive and exclusive communities space, learning a prior distribution in this space, and decoding pairs of points in this space to edge probabilities, such that points are more likely to be connected the more inclusive communities and the less exclusive communities they share. We showed that PieClam is a universal autoencoder, able to approximate any graph, where the budget of parameters (the number of communities) can be predefined, irrespective of any property of specific graph, not even the number of nodes. Our experiments show that PieClam achieves competitive results when used in graph anomaly detection.

One limitation of PieClam is that, for attributed graphs, it only models the node features through the prior in the community affiliation space, but not via the conditional probabilities of the edges (given the community affiliations). Future work will deal with extending PieClam to also include the node (or edge) features in the edge conditional probabilities. 
Another limitation of our analysis is that the log cut distance is mainly appropriate for dense graphs. Future work will extend this metric to sparse graphs. This can be done, e.g., similarly to the sparse constructions in \cite{Levie_intersecting_reg_24}.

\section*{Acknowledgements}
This research was supported by the Israel Science Foundation (grant No. 1937/23)

\bibliographystyle{plain}

\appendix

\begin{center}
{\huge \textbf{Appendix}}

\end{center}

\section{Proofs}

\subsection{Proof That BigClam Is Not Universal}
\label{Proof That BigClam Is Not Universal}
In this subsection we Prove Claim \ref{claim:Clam_bi}. 
Consider the bipartite graph $\mathbf{B}$ with $N$ nodes at each part, and probability $1-e^{-a^2}$ for an edge between the two parts, and $0$ within each part. Consider (\ref{eq:KL3}) as the definition of the log cut distance. Let $\mathbf{P}$ be a decoded BigClam graph from the affiliation features $\mathbf{F}$.

Denote by $\tilde{\mathbf{P}}$ the matrix with entries 
\[\tilde{p}_{n,m}=-\log(1-p_{n,m})=\mathbf{f}_n^\top\mathbf{f}_m,\] 
and by $\tilde{\mathbf{B}}$ the matrix  with entries $\tilde{b}_{n,m}=-\log(1-b_{n,m})$.
We show that there is no way to make $\| \tilde{\mathbf{P}}-\tilde{\mathbf{Q}}\|_{\square}$ small.

\begin{claim}
    Under the above construction, 
    \[D^0_{\square}(\mathbf{P} || \mathbf{B}) \geq \frac{a^2}{16}.\]
    As a result, BigClam is not a universal autoencoder.
\end{claim}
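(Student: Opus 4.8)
# Proof Plan for Claim \ref{claim:Clam_bi}

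The plan is to exhibit a single pair of cuts that already forces a large value of the supremum defining $D^0_{\square}(\mathbf{P} \| \mathbf{B})$, no matter how the affiliation features $\mathbf{F}$ are chosen and no matter how large the community dimension $C$ is. Write $\mathcal{U}, \mathcal{V}$ for the two parts of the bipartite graph, each of size $N$. The key structural fact is that $\tilde p_{n,m} = \mathbf{f}_n^\top \mathbf{f}_m \ge 0$ for all $n,m$ (since the features lie in $\mathbb{R}_+^C$), so the matrix $\tilde{\mathbf{P}}$ is entrywise non-negative and, crucially, satisfies a ``triangle inequality type'' constraint: for the Gram-like matrix of inner products, the diagonal blocks control the off-diagonal block. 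Specifically, by Cauchy--Schwarz, $\mathbf{f}_n^\top \mathbf{f}_m \le \|\mathbf{f}_n\| \|\mathbf{f}_m\| \le \tfrac12(\|\mathbf{f}_n\|^2 + \|\mathbf{f}_m\|^2) = \tfrac12(\tilde p_{n,n} + \tilde p_{m,m})$. Summing this over $n\in\mathcal{U}$, $m\in\mathcal{V}$ gives
\[
\sum_{n\in\mathcal{U}}\sum_{m\in\mathcal{V}} \tilde p_{n,m} \;\le\; \frac{N}{2}\Big(\sum_{n\in\mathcal{U}}\tilde p_{n,n} + \sum_{m\in\mathcal{V}}\tilde p_{m,m}\Big) \;=\; \frac{N}{2}\Big(\sum_{n\in\mathcal{U}\cup\mathcal{V}}\tilde p_{n,n}\Big).
\]

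Next I would compare this to the target. On the diagonal blocks $\mathcal{U}\times\mathcal{U}$ and $\mathcal{V}\times\mathcal{V}$ the target $\tilde b_{n,m}$ vanishes, so the cut discrepancy there is $\tfrac{1}{N^2}\sum_{n\in\mathcal{U}}\sum_{m\in\mathcal{U}}\tilde p_{n,m} \ge \tfrac{1}{N^2}\sum_{n\in\mathcal{U}}\tilde p_{n,n}$ (keeping only the diagonal terms, which are non-negative), and similarly for $\mathcal{V}$. On the cross block $\mathcal{U}\times\mathcal{V}$ the target value is $\tilde b_{n,m} = -\log(1-(1-e^{-a^2})) = a^2$, so the discrepancy is $\tfrac{1}{N^2}\big|\sum_{n\in\mathcal{U}}\sum_{m\in\mathcal{V}}\tilde p_{n,m} - N^2 a^2\big|$. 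Now I distinguish two cases on the total diagonal mass $S := \sum_{n\in\mathcal{U}\cup\mathcal{V}} \tilde p_{n,n}$. If $S \ge \tfrac{N^2 a^2}{2}$, then taking the cut $\mathcal{U}\times\mathcal{U}$ (or whichever of the two diagonal halves carries at least half of $S$) gives a discrepancy of at least $\tfrac{1}{N^2}\cdot\tfrac{S}{2} \ge \tfrac{a^2}{4}$. If $S < \tfrac{N^2 a^2}{2}$, then the cross-block sum is bounded by $\tfrac{N}{2}S < \tfrac{N^3 a^2}{4} $... wait, I need to track the $N$ factors: $\sum_{\mathcal{U}\times\mathcal{V}}\tilde p_{n,m} \le \tfrac{N}{2}S < \tfrac{N}{2}\cdot\tfrac{N^2 a^2}{2} = \tfrac{N^3 a^2}{4}$, which is much larger than $N^2 a^2$, so this bound alone is too weak. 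I should instead bound $S$ more carefully: the relevant comparison is between $\tfrac{1}{N^2}\sum_{\mathcal{U}\times\mathcal{V}}\tilde p_{n,m}$ and $a^2$, and the Cauchy--Schwarz bound gives $\tfrac{1}{N^2}\sum_{\mathcal{U}\times\mathcal{V}}\tilde p_{n,m} \le \tfrac{1}{2N}S$. So if $\tfrac{1}{2N}S < \tfrac{a^2}{2}$, i.e. $S < N a^2$, the cross block discrepancy is at least $a^2 - \tfrac{a^2}{2} = \tfrac{a^2}{2}$; and if $S \ge N a^2$, then one of the diagonal halves has mass at least $\tfrac{N a^2}{2}$, giving diagonal-block discrepancy at least $\tfrac{1}{N^2}\cdot\tfrac{Na^2}{2} = \tfrac{a^2}{2N}$ — which degrades with $N$, so this split is still not quite right.

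The main obstacle, and the step I would need to get exactly right, is balancing the two cases so that the bound does not decay with $N$; the resolution is to use \emph{sub-blocks} rather than the full diagonal block. Concretely: pick the single index $n^\star$ maximizing $\tilde p_{n,n}$ over $\mathcal{U}\cup\mathcal{V}$, say $n^\star\in\mathcal{U}$ with $\tilde p_{n^\star,n^\star} = M$. Then $\sum_{\mathcal{U}\times\mathcal{V}}\tilde p_{n,m} \le \tfrac12\sum_{n\in\mathcal{U}}\sum_{m\in\mathcal{V}}(\tilde p_{n,n}+\tilde p_{m,m}) \le N^2 M$, so if $M < \tfrac{a^2}{2}$ the cross-block normalized discrepancy is at least $a^2 - M > \tfrac{a^2}{2}$. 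Otherwise $M \ge \tfrac{a^2}{2}$, and I use the rank-one sub-cut $\{n^\star\}\times\{n^\star\}$ inside $\mathcal{U}\times\mathcal{U}$: there $\tilde b = 0$ but $\tilde p_{n^\star,n^\star} = M \ge \tfrac{a^2}{2}$, giving normalized discrepancy $\tfrac{1}{N^2}M \ge \tfrac{a^2}{2N^2}$ — still decaying. This tells me the off-diagonal argument must be the one that survives, so the correct route is: the Cauchy--Schwarz inequality forces $\tfrac{1}{N^2}\sum_{\mathcal{U}\times\mathcal{V}}\tilde p \le \bar M$ where $\bar M = \tfrac1N\sum_n \tilde p_{n,n}$ is the \emph{average} diagonal; if $\bar M$ is small the cross block is off by $\approx a^2$, and if $\bar M$ is large then the \emph{entire} diagonal block $\mathcal{U}\times\mathcal{U}$ sums to $\ge \tfrac{1}{N^2}\sum_{n\in\mathcal{U}}\tilde p_{n,n}$ which, for the half containing the larger share, is $\ge \tfrac{\bar M}{2}$ (no $N$ loss, since we sum $N$ diagonal entries each of order $\bar M$ against an $N^2$ normalization — this is exactly order $\bar M / N$... ). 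I would therefore set the threshold at $\bar M = \tfrac{a^2}{8}$: if $\bar M \le a^2/8$ then cross block discrepancy $\ge a^2 - a^2/8 > a^2/2 \ge a^2/16$; if $\bar M > a^2/8$ — here is precisely where the bipartite obstruction bites and where I expect to spend the most care — one shows the diagonal block cannot be simultaneously near-zero, yielding the $a^2/16$ bound after optimizing the constant. I would then conclude that since $D^0_{\square}(\mathbf{P}\|\mathbf{B}) \ge a^2/16$ independent of $C$ and $N$, no uniform choice of $C$ makes the distance small, so BigClam (and PClam, whose decoder has the same form) is not a universal autoencoder.
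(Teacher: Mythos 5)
Your overall strategy is the right one --- the target vanishes on the two diagonal blocks and equals $a^2$ on the cross blocks, while positivity of $\tilde p_{n,m}=\mathbf{f}_n^\top\mathbf{f}_m$ together with a Cauchy--Schwarz-type tension forces at least one of these blocks to have large discrepancy --- but the proof as written has a genuine gap, which you yourself flag: every route you try for the ``large diagonal'' case produces a bound that decays with $N$, and the final case ($\bar M > a^2/8$) is left as a hope (``one shows the diagonal block cannot be simultaneously near-zero \ldots after optimizing the constant''). The source of the decay is that you lower-bound the diagonal-block sum $\sum_{n,m\in\mathcal{U}}\tilde p_{n,m}$ by keeping only its $N$ diagonal entries $\sum_{n\in\mathcal{U}}\tilde p_{n,n}$, which is $O(N)$ against the $N^2$ normalization. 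Likewise, your entrywise Cauchy--Schwarz $\mathbf{f}_n^\top\mathbf{f}_m\le\tfrac12(\|\mathbf{f}_n\|^2+\|\mathbf{f}_m\|^2)$ compares the cross block to the \emph{trace}, which is the wrong quantity at this normalization.

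The missing idea (and the one the paper uses) is that the block sums are themselves inner products of \emph{summed} features: with $\mathbf{q}=\tfrac1N\sum_{n\in\mathcal{U}}\mathbf{f}_n$ and $\mathbf{y}=\tfrac1N\sum_{m\in\mathcal{V}}\mathbf{f}_m$, one has $\sum_{n,m\in\mathcal{U}}\tilde p_{n,m}=N^2\|\mathbf{q}\|^2$, $\sum_{n,m\in\mathcal{V}}\tilde p_{n,m}=N^2\|\mathbf{y}\|^2$, and $\sum_{n\in\mathcal{U},m\in\mathcal{V}}\tilde p_{n,m}=N^2\,\mathbf{q}^\top\mathbf{y}$, so all three normalized block discrepancies are $\Theta(1)$ quantities with no $N$ loss. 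Averaging the four blocks (two diagonal, two cross) then gives
\[
\|\tilde{\mathbf{P}}-\tilde{\mathbf{B}}\|_{\square}\;\ge\;\tfrac{1}{16}\|\mathbf{q}\|^2+\tfrac{1}{16}\|\mathbf{y}\|^2+\tfrac{1}{8}\bigl(a^2-\mathbf{q}^\top\mathbf{y}\bigr)\;=\;\tfrac{1}{16}\|\mathbf{q}-\mathbf{y}\|^2+\tfrac{1}{8}a^2\;\ge\;\tfrac{a^2}{16},
\]
with no case analysis at all: the completed square $\|\mathbf{q}\|^2+\|\mathbf{y}\|^2-2\mathbf{q}^\top\mathbf{y}=\|\mathbf{q}-\mathbf{y}\|^2\ge 0$ replaces your threshold argument on $\bar M$. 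I recommend you restructure the proof around this identity; the conclusion about non-universality then follows exactly as you state, since the bound is independent of $C$ and $N$.
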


\begin{proof}
    
Note that $\tilde{b}_{n,m}=a^2$ if $n,m$ are in opposite parts, and $\tilde{b}_{n,m}=0$ if $n,m$ are on the same side.
We index the nodes such that $[N]$ is the first side of the graph, and $[N]+N$ is the second side. For $n\in[N]$ we denote $\mathbf{q}_n=\mathbf{f}_n$ and  $\mathbf{y}_n=\mathbf{f}_{n+N}$.
Next, we use the identity 
\[D^0_{\square}(\mathbf{P} || \mathbf{Q}) = \| \tilde{P}-\tilde{B}\|_{\square },\]
and bound the right-hand-side from below.

First, by the definition of cut norm, for every $\mathcal{U},\mathcal{V}\subset[2N]$,
\[\| \tilde{P}-\tilde{B}\|_{\square} \geq \Big|\frac{1}{4N^2}\sum_{n\in \mathcal{U}}\sum_{m\in \mathcal{V}} (\tilde{p}_{n,m} - \tilde{b}_{n,m})\Big|.\]
Hence, for $\mathcal{U}_1=\mathcal{V}_1=[N]$, $\mathcal{U}_2=\mathcal{V}_2=[N]+1$, $\mathcal{U}_3=[N],\mathcal{V}_3=[N]+N$, and $\mathcal{U}_4=[N]+N,\mathcal{V}_4=[N]$, we have
\[\| \tilde{P}-\tilde{B}\|_{\square} \geq \]
\[\frac{1}{16N^2}\sum_{j=1}^4 \Big|\sum_{n\in \mathcal{U}_j}\sum_{n\in \mathcal{V}_j} (\tilde{p}_{n,m} - \tilde{b}_{n,m})\Big| \]
\[= \frac{1}{16N^2}\sum_{n=1}^N\sum_{m=1}^{N} \mathbf{q}_n^\top \mathbf{q}_m  + \frac{1}{16N^2}\sum_{n=1}^N\sum_{m=1}^{N} \mathbf{y}_n^\top \mathbf{y}_m \]
\begin{equation}
    \label{eq:ClamNpUni0}
    +\frac{1}{8N^2}\sum_{n=1}^N\sum_{m=1}^{N} ( a^2 -\mathbf{q}_n^\top \mathbf{y}_m).
\end{equation}
Denote 
\[\mathbf{q}=\frac{1}{N}\sum_{n=1}^N \mathbf{q}_n, \quad \mathbf{y}=\frac{1}{N}\sum_{n=1}^N \mathbf{y}_n.\]
With these notations, (\ref{eq:ClamNpUni0}) can be written as
\[\| \tilde{P}-\tilde{B}\|_{\square} \geq \]
\[\frac{1}{16}\mathbf{q}^\top \mathbf{q}  + \frac{1}{16}\mathbf{y}^\top \mathbf{y} +\frac{1}{8}( a^2 -\mathbf{q}^\top \mathbf{y})\]
\[=\frac{1}{16}\Big(a^2 + (\mathbf{q}^\top-\mathbf{y}^\top)(\mathbf{q}-\mathbf{y})\Big) \geq \frac{a^2}{16}.\]
\end{proof}

We note that one can similarly show that BigClam is not universal also with respect to the log cut distance of Definition \ref{def:KL2}.

\subsection{BigClam With No Self Loops Approximating Bipartite Graphs}
\label{BigClam With No Self Loops Approximating Bipartites}

Consider the above bipartite graph  $\mathbf{B}$ with $N$ nodes at each part, and probability $1-e^{-a^2}$ for an edge between the two parts, and $0$ within each part. 
If we redefine the BigClam decoder to have no self-loops, namely, $\mathbf{P}$ has entries 
$p_{n,m}=P(n\sim m|\mathbf{f}_n, \mathbf{f}_m)$ for $n\neq m$, and $p_{n,m}=0$ for $n=m$, then one can obtain a bipartite $\mathbf{P}$ with $C=N^2$ communities as follows. 

In the following analysis, an addition or multiplication of a set by a scalar is defined to be the addition or multiplication of every element in the set by this scalar.
Encode each node $n\in[N]$ in part 1 to $\mathbf{f}_n$ with
$f_n^c=a$ for $c\in [N]+n(N-1)$ and $f_n^c=0$ otherwise. Encode every node $n\in [N]+N$ from side 2 to $\mathbf{f}_n$ with
$f_n^c=a$ for $c\in N([N]-1)+n$ and $f_n^c=0$ otherwise. It is easy to see that the corresponding $\mathbf{P}$ is bipartite with edge probability between the parts being $1-e^{-a^2}$.

\subsection{Proof of the Universality of PieClam}
\label{Proof of the Universality of PieClam}

The proof is based on a version of the weak regularity lemma for intersecting communities. While the standard weak regularity lemma \cite{weakReg,Szemeredi_analyst} partitions the graph into disjoint communities, it is well known that allowing the communities to overlap allows using much less communities, which improves the asymptotics of the approximation. The regularity lemma was used in the context of graph machine learning in \cite{Levie_reg_23,Levie_intersecting_reg_24}. To formalize the relevant version of the weak regularity theorem for our analysis, we first need to cite a definition from \cite{Levie_intersecting_reg_24}.

\begin{definition}
    A (hard) intersecting community graph (ICG) with $N$ nodes and  $K$ communities is a matrix $\mathbf{C}\in\mathbb{R}^{N\times N}$ of the following form. There exist $K$ column vectors $\mathbf{Q}=\big(\mathbf{q}_k\in\{0,1\}^N\big))_{k=1}^K\in \{0,1\}^{N\times K}$ and $k$ coefficients $\mathbf{r}=(r_k\in\mathbb{R})_{k=1}^K\in\mathbb{R}^K$ such that
    \[\mathbf{C}= \mathbf{Q}{\rm diag}(\mathbf{r})\mathbf{Q}^{\top }.\]
\end{definition}




The following is a special case of the weak regularity lemma from \cite{Levie_intersecting_reg_24}, up to the small modification to the adjacency matrix, allowing it to have values in $[0,R]$ instead of $[0,1]$. 

\begin{theorem}
\label{thm:reg_sprase}
     Let $\mathbf{A}\in[0,R]^{N\times N}$ be an adjacency matrix of a graph with $N$ nodes. Let $\epsilon>0$. Denote $K=\frac{9R^2}{4\epsilon^2}$. Then, there exists a hard ICG $\mathbf{C}$ with $K$ communities such that 
    \begin{equation}
        \label{eq:reg_sparse}
    \|\mathbf{A}-\mathbf{C}\|_{\square} \leq \epsilon.
    \end{equation}
\end{theorem}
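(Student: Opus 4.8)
The plan is to derive Theorem~\ref{thm:reg_sprase} from the already-known intersecting-community weak regularity lemma of \cite{Levie_intersecting_reg_24} by a simple rescaling argument, so essentially no new combinatorics is needed. The cited result (in the form used in \cite{Levie_intersecting_reg_24,Levie_reg_23}) states that for any adjacency matrix $\mathbf{A}'\in[0,1]^{N\times N}$ and any $\epsilon'>0$, there is a hard ICG $\mathbf{C}'$ with at most $\lceil 9/(4\epsilon'^2)\rceil$ communities (the bound coming from a Maurey / Pisier-type empirical sampling argument applied to the $\ell^2$-geometry of the cut norm) with $\|\mathbf{A}'-\mathbf{C}'\|_{\square}\le\epsilon'$. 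I would first state this cited fact precisely, then handle the $[0,R]$ case by normalization.

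The key steps, in order: First, given $\mathbf{A}\in[0,R]^{N\times N}$ and $\epsilon>0$, set $\mathbf{A}'=\frac{1}{R}\mathbf{A}\in[0,1]^{N\times N}$ and $\epsilon'=\epsilon/R$. Second, apply the cited lemma to $\mathbf{A}'$ and $\epsilon'$ to obtain a hard ICG $\mathbf{C}'=\mathbf{Q}\,{\rm diag}(\mathbf{r}')\mathbf{Q}^{\top}$ with $K=\lceil 9/(4\epsilon'^2)\rceil=\lceil 9R^2/(4\epsilon^2)\rceil$ communities satisfying $\|\mathbf{A}'-\mathbf{C}'\|_{\square}\le\epsilon'$. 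Third, set $\mathbf{C}=R\,\mathbf{C}'=\mathbf{Q}\,{\rm diag}(R\mathbf{r}')\mathbf{Q}^{\top}$, which is again a hard ICG with the same $K$ communities (only the real coefficients $\mathbf{r}$ are rescaled; the $\{0,1\}$ community-indicator vectors $\mathbf{Q}$ are untouched, so the ICG structure is preserved). Fourth, use the positive homogeneity of the cut norm, $\|R\mathbf{X}\|_{\square}=R\|\mathbf{X}\|_{\square}$, which is immediate from Definition~(\ref{eq:GraphCutNorm0}) since $R>0$ can be pulled out of the supremum, to conclude $\|\mathbf{A}-\mathbf{C}\|_{\square}=R\|\mathbf{A}'-\mathbf{C}'\|_{\square}\le R\epsilon'=\epsilon$. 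This gives exactly (\ref{eq:reg_sparse}).

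There is essentially no main obstacle here: the only mild points to be careful about are (i) matching the exact constant and ceiling conventions between the cited statement and the version stated in this paper (the paper writes $K=\frac{9R^2}{4\epsilon^2}$ without a ceiling, so I would note that one takes the integer $\lceil\cdot\rceil$ or simply absorb this into the constant, exactly as is done implicitly elsewhere in the paper), and (ii) verifying that the class of hard ICGs is closed under nonnegative scalar multiplication, which is trivial because $R\,\mathbf{Q}\,{\rm diag}(\mathbf{r})\mathbf{Q}^{\top}=\mathbf{Q}\,{\rm diag}(R\mathbf{r})\mathbf{Q}^{\top}$ and $R\mathbf{r}\in\mathbb{R}^K$. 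If one wanted the proof to be self-contained rather than citing \cite{Levie_intersecting_reg_24}, the real work would be in that lemma itself — proving it by writing $\mathbf{A}'$ as an average over cut-type rank-one terms and invoking a probabilistic (empirical mean) approximation in the $\ell^2$ dual geometry of $\|\cdot\|_{\square}$ — but since the excerpt permits assuming results stated earlier, here it suffices to invoke it as a black box and perform the rescaling.
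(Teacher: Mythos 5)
Your proposal is correct and matches the paper's treatment: the paper also obtains Theorem~\ref{thm:reg_sprase} by citing the intersecting-community weak regularity lemma of \cite{Levie_intersecting_reg_24} for $[0,1]$-valued matrices and noting that the $[0,R]$ case is a ``small modification,'' which is exactly the rescaling $\mathbf{A}'=\mathbf{A}/R$, $\epsilon'=\epsilon/R$ combined with the homogeneity of the cut norm and the closure of hard ICGs under scalar multiplication that you spell out. In fact you supply the details the paper leaves implicit, and your bookkeeping of the constant $K=9R^2/(4\epsilon^2)$ is consistent with the statement.
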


\begin{proof}[Proof of Theorem \ref{thm:IEUniversality}]
 Let $\epsilon>0$. Let $\mathbf{A}\in[0,1]^{N\times N}$ be an adjacency matrix. Let $0<d\leq 1$.

Consider the matrix  $\tilde{\mathbf{A}}$ with entries
\[\tilde{a}_{n,m}=-\log(1-(1-d)a_{n,m}).\]
In the following construction, we build IeClam affiliation features $\mathbf{F}$ and we want
\[1-\exp(-\mathbf{f}_n^{\top}\mathbf{L}\mathbf{f}_m)\approx (1-d)a_{n,m}.\]
Note that $-\log(1-(1-d)a_{n,m})$ is increasing in $(1-d)a_{n,m}$. For $(1-d)a_{n,m}=0$ the value of this function is 0, and for $(1-d)a_{n,m}=1-d$ it is $R=-\log(d)$. Choose $d=\epsilon/2$. For this specific choice of $d$, if we replace $d$ by $\epsilon/2$ in the definition of $D_{\square}$ and omit the infimum, we get an upper bound of $D_{\square}(\mathbf{P},\mathbf{A})$.


Using the overlaping weak regularity lemma, we approximate $\tilde{\mathbf{A}}$ by an ICG with 
\[K=\frac{-9\log(\epsilon/2)^2}{\epsilon^2}\]
communities,
\[\mathbf{C}= \mathbf{Q}{\rm diag}(\mathbf{r})\mathbf{Q}^{\top },\]
such that
\[\|\tilde{\mathbf{A}}-\mathbf{C}\|_{\square} \leq \epsilon/2.\]

Let $\mathbf{r}_+={\rm ReLU}(\mathbf{r})$ and $\mathbf{r}_-={\rm ReLU}(-\mathbf{r})$. Denote
\[\mathbf{C}_+=\mathbf{Q}{\rm diag}(\sqrt{\mathbf{r}_+})\]
and
\[\mathbf{C}_-=\mathbf{Q}{\rm diag}(\sqrt{\mathbf{r}_-}).\]
Denote the rows of $\mathbf{C}_+$ by $\mathbf{t}_n$ and  the rows of $\mathbf{C}_+$ by $\mathbf{s}_n$, for $n=1,\ldots,N$. For each $n\in[N]$ we concatenate $(\mathbf{t}_n,\mathbf{s}_n)$ to define the affiliation feature $\mathbf{f}_n$, with the first $K$ coordinates being the inclusive communities, and the last $K$ coordinates being the exclusive communities. Denote the corresponding IeClam matrix by $\mathbf{P}$.

It is easy to see that $\mathbf{f}_n^\top\mathbf{L}\mathbf{f}_m$ is the $(n,m)$ entry of $\mathbf{Q}{\rm diag}(\mathbf{r})\mathbf{Q}^{\top }$. 
This also proves that
\[\|-\log(1-(1-\epsilon)\mathbf{A})+\log(1-\mathbf{P})\|_{\square}\]
\[=\|\tilde{\mathbf{A}}-\mathbf{C}\|_{\square} \leq \epsilon/2.\] 
%
We can now summarize
\[D_{\square}(\mathbf{P}||\mathbf{A}) \leq \]
\[\epsilon/2+\frac{1}{N^2}\sup_{\mathcal{U},\mathcal{V}\subset[N]}\Big|\log\Big(\prod_{n\in \mathcal{U}}\prod_{m\in \mathcal{V}}
\frac{1-p_{n,m}}{1-(1-\epsilon/2)a_{n,m}}\Big)\Big|\]
\[ =
\epsilon/2+\frac{1}{N^2}\sup_{\mathcal{U},\mathcal{V}\subset[N]}\Big|\sum_{n\in \mathcal{U}}\sum_{m\in \mathcal{V}}\Big(-\log\big(1-(1-\epsilon)a_{n,m}\big) \]
\[\quad \quad \quad \quad\quad \quad \quad \quad + \log\big(1-p_{n,m})\big)\Big)\Big|\]
\[=\epsilon/2 + \| \tilde{\mathbf{P}}-\mathbf{C} \|_{\square} = \epsilon. \]
   
\end{proof}

\subsection{Proof of the Universality of IeClam in the Pairwise Cone of Non-negativity}
\label{Proof of the Universality of IeClam in a Cone of Non-negativity}

For this result, we use the standard weak regularity lemma for non-intersecting classes.  It is based on the weak regularity lemma from \cite{weakReg,Szemeredi_analyst}, see also Lemma 9.3 and Corollary 9.13 from \cite{cut-homo3}.

\begin{definition}
    A \emph{block matrix} $\mathbf{B}$ with $K$ classes is a symmetric matrix $\mathbf{B}\in[0,\infty)^{N\times N}$ for which there exists a partition of $[N]$ into $K$ disjoint sets, called \emph{classes}, $\mathcal{C}_1,\ldots,\mathcal{C}_K$ (with $\cup\mathcal{C}_j=[N]$), such that for every pair of classes $i,j\in[K]$, there is a constant $c_{i,j}\geq 0$ such that  $b_{n,m}=c_{i,j}$ for any two nodes $n\in\mathcal{C}_i$ and $m\in\mathcal{C}_j$.
\end{definition}

\begin{theorem}
\label{thm:reg_sprase2}
     Let $\mathbf{A}\in[0,R]^{N\times N}$ be an adjacency matrix of a graph with $N$ nodes. Let $\epsilon>0$. Denote $K=2^{2\lceil R^2/\epsilon^2\rceil}$.         Then, there exists a block matrix  $\mathbf{B}$ with $K$ (disjoint) classes such that 
    \begin{equation}
        \label{eq:reg_sparse2}
    \|\mathbf{A}-\mathbf{B}\|_{\square} \leq \epsilon.
    \end{equation}
\end{theorem}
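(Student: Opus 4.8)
The plan is to reduce the statement to the classical weak regularity lemma for $[0,1]$-valued symmetric kernels by a rescaling argument, using the positive homogeneity of the cut norm. First I would record two elementary facts. (i) The cut norm is positively homogeneous: for any $\mathbf{X}\in\mathbb{R}^{N\times N}$ and any scalar $c\geq 0$, $\|c\mathbf{X}\|_{\square}=c\|\mathbf{X}\|_{\square}$, immediately from Definition (cut norm). (ii) Given a partition $\mathcal{C}_1,\dots,\mathcal{C}_K$ of $[N]$, the matrix $\mathbf{B}$ whose $(n,m)$ entry is the average of $\mathbf{A}$ over the block $\mathcal{C}_i\times\mathcal{C}_j$ containing $(n,m)$ is a block matrix with $K$ classes in the sense of the definition; it is symmetric whenever $\mathbf{A}$ is (which holds here, since adjacency matrices are symmetric), it is nonnegative whenever $\mathbf{A}$ is, and its entries lie in $[0,R]$ whenever those of $\mathbf{A}$ do, being averages of numbers in $[0,R]$. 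Moreover, block-averaging commutes with scalar multiplication $\mathbf{A}\mapsto \mathbf{A}/R$ since averaging is linear.

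Next I would set $\mathbf{A}':=\mathbf{A}/R\in[0,1]^{N\times N}$ and $\epsilon':=\epsilon/R$, and invoke the standard weak regularity lemma for $[0,1]$-valued symmetric matrices in the cut-norm/density normalization used here (the version cited from \cite{weakReg,Szemeredi_analyst}, equivalently Lemma~9.3 and Corollary~9.13 of \cite{cut-homo3}) with error tolerance $\epsilon'$. This produces a partition of $[N]$ into at most $2^{2\lceil (\epsilon')^{-2}\rceil}=2^{2\lceil R^2/\epsilon^2\rceil}=K$ classes whose associated block-averaged matrix $\mathbf{B}'$ satisfies $\|\mathbf{A}'-\mathbf{B}'\|_{\square}\leq \epsilon'$. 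If the partition has fewer than $K$ nonempty classes one may either keep the weaker ``at most $K$ classes'' phrasing, or refine it arbitrarily up to $K$ classes, which cannot increase the cut-norm error.

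Finally I would put $\mathbf{B}:=R\mathbf{B}'$, which by fact (ii) and the commutation remark is exactly the block-averaged matrix of $\mathbf{A}$ for the same partition; hence $\mathbf{B}$ is a symmetric block matrix with $K$ classes and entries in $[0,R]$. By homogeneity,
\[
\|\mathbf{A}-\mathbf{B}\|_{\square}=R\,\|\mathbf{A}'-\mathbf{B}'\|_{\square}\leq R\epsilon'=\epsilon,
\]
which is (\ref{eq:reg_sparse2}). There is essentially no real obstacle here beyond bookkeeping: the only points that need checking are that the constant delivered by the cited $[0,1]$-version at tolerance $\epsilon/R$ is precisely $2^{2\lceil R^2/\epsilon^2\rceil}$, and that the rescaling leaves the block structure, symmetry, and range intact. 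If instead a self-contained proof were desired, one runs the energy-increment (martingale) argument of the weak regularity lemma directly on $\mathbf{A}$, tracking the bound $\|\mathbf{A}\|_{L^2}^2\le R^2$ in place of $\le 1$; this gives $\lceil R^2/\epsilon^2\rceil$ refinement steps, each at most squaring the number of classes, and hence $K\le 2^{2\lceil R^2/\epsilon^2\rceil}$.
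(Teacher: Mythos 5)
Your proposal is correct and matches what the paper does: the paper offers no proof of Theorem \ref{thm:reg_sprase2}, simply citing the standard weak regularity lemma of \cite{weakReg,Szemeredi_analyst,cut-homo3}, and your explicit rescaling reduction (using positive homogeneity of $\|\cdot\|_{\square}$ and the fact that block-averaging commutes with scaling and preserves symmetry and the range $[0,R]$) is precisely the bookkeeping needed to pass from the $[0,1]$-valued version to the stated one. One small correction to your optional self-contained sketch: in the energy-increment argument each step adds one cut pair $(\mathcal{S},\mathcal{T})$ and therefore multiplies the number of classes by at most $4$ (it does not \emph{square} it, which would yield a doubly exponential bound); after $\lceil R^2/\epsilon^2\rceil$ steps this gives $4^{\lceil R^2/\epsilon^2\rceil}=2^{2\lceil R^2/\epsilon^2\rceil}=K$, as claimed.
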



We stress that Theorem \ref{thm:reg_sprase2} guarantees non-negative block values of $\mathbf{B}$, while in Theorem \ref{thm:reg_sprase}, in general, the matrix $\mathbf{C}$ may have negative entries.


\begin{proof}[Proof of Theorem \ref{thm:IEUniversality2}]

We start similarly to the proof of Theorem \ref{thm:IEUniversality}. Let $\epsilon>0$. Let $\mathbf{A}\in[0,1]^{N\times N}$ be an adjacency matrix. 
Consider the matrix  $\tilde{\mathbf{A}}\in[0,-\log(\epsilon/2)]$ with entries
\[\tilde{a}_{n,m}=-\log(1-(1-\epsilon/2)a_{n,m}).\]
In the following, we build IeClam affiliation features $\mathbf{F}$  such that
\[1-\exp(-\mathbf{f}_n^{\top}\mathbf{L}\mathbf{f}_m)\approx (1-\epsilon/2)a_{n,m}.\]

By the weak regularity lemma (Theorem \ref{thm:reg_sprase2}), we approximate there is a non-negative block matrix $\mathbf{B}$ with $K=2^{2\lceil -\log(\epsilon/2)^2/\epsilon^2\rceil}$ classes $\mathcal{C}_1,\ldots,\mathcal{C}_K$, such that
\[\|\tilde{\mathbf{A}}-\mathbf{B}\|_{\square} \leq \epsilon.\]

We now take the affiliation space to have $C=K^2$ inclusive communities  and $C=K^2$ exclusive communities.

For each $n\in[N]$, let $k_n$ be the class such that $c\in\mathcal{C}_{n_k}$.  For each pair of classes $i,j\in[K]$, let $c_{i,j}\geq 0$ denote the edge weight between $\mathcal{C}_i$ and $\mathcal{C}_j$.



The feature of each $n\in[N]$ at the inclusive channel $c=(K-1)k_n+k_n$ is $t_n^c=\sqrt{c_{k_n,k_n}}$. It is $t_n^c=\sqrt{c_{k_n,j}/4}$ at inclusive channels $c=(K-1)k_n+j$ and $c=(K-1)j+k_n$, and $s_n^c=-\sqrt{c_{k_n,j}/4}$ at exclusive channels $c=(K-1)k_n+j$ and $c=(K-1)j+k_n$. In all other channels $t^c_n$ and $s^c_n$ are zero. Note that $\mathbf{f}_n$ belongs to the cone of pairwise non-negativity $\mathcal{T}\subset \mathbb{R}^{2K^2}$.

It is now direct to see that  $\mathbf{f}_n\top\mathcal{L}\mathbf{f_m}=c_{k_n,k_m}$.  As a result, as in the proof of Theorem \ref{thm:IEUniversality}, we get
\[D_{\square}(\mathbf{P}||\mathbf{A}) \leq \]
\[\epsilon/2+\frac{1}{N^2}\sup_{\mathcal{U},\mathcal{V}\subset[N]}\Big|\log\Big(\prod_{n\in \mathcal{U}}\prod_{m\in \mathcal{V}}
\frac{1-p_{n,m}}{1-(1-\epsilon/2)a_{n,m}}\Big)\Big|\]
\[ =
\epsilon/2+\frac{1}{N^2}\sup_{\mathcal{U},\mathcal{V}\subset[N]}\Big|\sum_{n\in \mathcal{U}}\sum_{m\in \mathcal{V}}\Big(-\log\big(1-(1-\epsilon)a_{n,m}\big) \]
\[\quad \quad \quad \quad\quad \quad \quad \quad + \log\big(1-p_{n,m})\big)\Big)\Big|\]
\[=\epsilon/2 + \| \tilde{\mathbf{P}}-\mathbf{B} \|_{\square} = \epsilon. \]

  
\end{proof}

\section{Extended Related Work}
\label{Extended Related Work}

\subsection{Message Passing Algorithms and Networks}

The message passing algorithm is a general architecture for processing graph-signals. An MPNN operates on the graph data by aggregating the features in the neighborhood of each node, allowing for information to travel along the edges. The first example of this scheme (Message Passing) was originally suggested by Pearl et al \cite{pearl1982belief},  and was combined with a neural network in \cite{duvenaud2015convolutional}, and later generalized in \cite{gilmer2017neural}.  Most graph neural networks applied in practice are specific instances of MPNNs \cite{MPNN,GCN,GAT}. 
In MPNNs, information is exchanged between nodes along the graph's edges. Each node combines the incoming messages from its neighbors using an \emph{aggregation scheme}, with common methods being summing, averaging, or taking the coordinate-wise maximum of the messages.
Let $T \in \mathbb{N}$ represent the number of layers, and define two sequences of positive integers $(c_t)_{t=0}^T$ and $(d_t)_{t=0}^T$ representing the feature dimensions in the hidden layers $\{\mathcal{G}_t\}_{t=0}^T = \{\mathbb{R}^{c_t}\}_{t=0}^T$ and  $\{\mathcal{S}_t\}_{t=0}^T = \{\mathbb{R}^{d_t}\}_{t=0}^T$. Define the message functions as  $M^t : \mathcal{S}_t \times \mathcal{S}_t \rightarrow \mathcal{G}_t$  and unpdate functions as $U^t: \mathcal{G}_t \times \mathcal{S}_t \rightarrow \mathcal{S}_{t+1}$.
The features $\mathbf{f}_n^{t+1} \in \mathcal{S}_{t+1}$ at layer $t+1$ of the nodes $n\in[N]$ are computed from the features $\mathbf{f}_m^t  \in \mathcal{S}_t$ by 
\begin{align*}
    \mathbf{m}^{t}_n &= \underset{k\in \mathcal{N}(n)}{\sum}{M^t(\mathbf{f}^t_n,\mathbf{f}^t_k)}\\
    \mathbf{f}^{t+1}_n &= U^t(\mathbf{m}^t_n, \mathbf{f}^t_n).
\end{align*}
Here, $M^t$, $U^t$ and $m^t$ are called the message function, update function and mail at time $t$ respectively.  The summation over the messages can also be replaced for any node by any function $Agg_n^t: \prod_{|\mathcal{N}(n)|} \mathcal{G}_t \rightarrow \mathcal{G}_t$
which is permutation invariant. 

At step $T$ a readout space can be defined $\mathcal{R}_T = \mathbb{R}^{b_T}$, with a permutation invariant readout function $R^T$, e.g., summing, averaging, or taking the max of all of the nodes of the graph.
This produces a vector representation of the whole graph.

\subsection{Deep Generative Models}
\label{extended_sec:generative_models}
Generative models in machine learning assume that training data is generated by some underlying probability distribution. One goal in this context is to approximate this distribution, or build a model that approximates a random sampler of data points from this distribution. Hence, generative models can be used to generate synthetic data, mimicking training data by sampling from the distribution 
\cite{harshvardhan2020comprehensive,dinh2016density,mohamed2016learning}, 
or to infer the probability of unseen data by substituting it into the probability function. 
The latter can be useful in tasks like anomaly detection \cite{pang2021anomalynotgraph} in which the model can asses whether a sample is probable under the learned model. 
Two examples of generative models are Generative Adversarial Networks (GANs) \cite{goodfellow2014gan,radford2015dcgan} and Variational Autoencoders (VAEs) \cite{kingma2013vae}, which are used both for inference and generation. 

VAE models consist of an encoder and a decoder. The encoder maps data from a high-dimensional \emph{data space} to a lower-dimensional, simpler, \emph{code space}. The decoder reverses this process, transforming data from the code space back into the data space. The code space serves as a bottleneck, capturing the essential features of the training data, which is often high-dimensional (e.g., images, social network graphs) and thus more complex than the code space. If the model trains by encoding followed by decoding, then minimizing the difference between the input and its decoded version, it is called an \emph{autoencoder}.

In a VAE, training involves encoding each data point to a known distribution (typically Gaussian), sampling from this distribution, decoding it, and then minimizing the difference between the distribution of the original data and the decoded data. For a survey on VAEs, see \cite{tschannen2018vaesurvay}. 

\subsection{Graph Generative Models}
Graph generative models learn a probability distributions of graphs. 
 Such models allow for various tasks 
  where the goal is not only to analyze existing graphs but also to predict or simulate new graph data.

Some classical generative models are pre-defined probabilistic models, e.g., the Erdős–Rényi model, Preferential attachment, Watts–Strogatz model, and more. See \cite{newman2003classical_graph_generative}  for a review. Other graph generative models are learned from data, e.g., \cite{lee2019review, kipf2016variational,you2018graphrnn,bojchevski2018netgan}. 
Applications of generative graph models include social network analysis \cite{wang2018graphgan,grover2019graphite,harshvardhan2020comprehensive}.  anomaly detection \ref{extended_sec:anomaly_detection}, graph synthesis \cite{you2018graphrnn, guo2022systematic}, data augmentation \cite{ding2019data_augmentation}, and protein interaction modeling (e.g. for drug manufacturing) \cite{de2018molgan,ingraham2019generative}, to name a few.  For a review see \cite{ma2021comprehensive}.   

\paragraph{Deep Graph Autoencoders.}
In graph deep learning-based autoencoders,  one estimates the data distribution by learning to embed the nodes to a code space, in which the data distribution is defined to be some standard distribution,  e.g., Gaussian, in such a way that the encoded nodes can be recunstructed back to the graph with small error. Graph VAEs \cite{kipf2016variational,grover2019graphite,JMLR:v21:19-671,mehta2019stochastic} embed the data into the code space by minimizing the evidence lower bound loss comprised of the decoding loss and the KL divergence between the encoded distribution and a Gaussian prior. 

\paragraph{GAN-based Graph generative models.}

In  \cite{wang2018graphgan}, a GAN method for graphs generates a neighborhood for each of the nodes and the discriminator gives a probability score for each edge. This method also formulate a graph version of softmax, and offer a random walk based generating strategy. Another GAN model that is used for anomaly detection is GAAN \cite{chen2020gaan}. In GAAN, the ground truth and generated node attributes are encoded into a latent space from which the adjacency between any two nodes is decoded using a sigmoid of the inner product between their latent features. 

\paragraph{Normalizing Flows-based Graph Models.}
Normalizing flows models \cite{dinh2016density}  also have adaptations for graph data. For example, the work of \cite{liu2019graph,madhawa2019graphnvp} offers a version of coupling blocks that use message passing neural networks.  See more details on normalizing flows in Section \ref{extended_sec:normflows}.

\subsection{Stochastic block models} 
\label{Stochastic block models} 

A stochastic block model (SBM) is a generative model for random graphs. 
A basic stochastic block model is defined by specifying a number of \emph{classes} $K$, the probability $p_k$ of a random node being in block $k$, for $k\in[K]$, where $\sum_kp_k=1$, and an array of values $\mathbf{C}=\{c_{k,l}\}_{k,l=1}^J\in [0,1]^{J\times K}$ indicating edge probabilities between classes. Each node  of a randomly generated graph of $N$ nodes is independently chosen to belong to one of the classes at random, with probabilities  $\{p_k\}_{k\in[K]}$.  Then, the edges of the graph are chosen independently at random according to the following rule. For each $n\in[N]$, denote by $k_n\in[K]$ the class of $n$. Each dyad $(n,m)\in[N]^2$ is chosen to be an edge in probability $c_{k_n,k_m}$. Namely, the entries $a_{n,m}$ of the adjacency matrix of the random graph are independent random Bernoulli variables. See \cite{lee2019review} for a review on SBMs.

For each node $n$, denote by $\mathbf{f}_n\in\{0,1\}^K$ the vector such that $f_n^c=1$ if and only if $k_n=c$. Hence, in a basic SBM, the presence of an edge between nodes $n$ and $m$ follows a Bernoulli distribution with parameter $P(n\sim m|\mathbf{f}_m,\mathbf{f}_m,\mathbf{C}) = {\mathbf{f}_n}^\top  \mathbf{C} \mathbf{f}_m$, where the adjacency matrix is $\mathbf{P} = \mathbf{F}  \mathbf{C} \mathbf{F}^\top$, where $\mathbf{F}\in\mathbb{R}^{N\times K}$ is the matrix where each row $n$ has the feature $\mathbf{f}_n$ \cite{nowicki2001estimation}. 
This model can be extended to intersecting classes, where now $\mathbf{f}_n$ can have more than one nonzero entry \cite{morup2011infinite,miller2009overlappingsbm1,palla2012overlappingsbm2}.

In a \emph{Bernouli-Poisson SBM} \cite{yang2013BigClam,zhou2015bernoullipoisson, shchur2019overlapping}, the probability for a non-edge is modeled by a Poisson distribution. The idea is that the more classes $n$ and $m$ share, the higher the probability that there is an edge between them.  Hence,
\begin{align}
\label{eqn:ber_poi}    
P(n\sim m|\mathbf{f}_m,\mathbf{f}_m,\mathbf{C}) =1 - e^{-{\mathbf{f}_n}^\top  \mathbf{C} \mathbf{f}_m}, 
\end{align}
with the expected number of edges being ${\mathbf{f}_n}^\top  \mathbf{C} \mathbf{f}_m$.

In both the Bernouli and Bernouli-Poisson models, the probabilisic model of the entire graph is given by a product of the probabilities of all of the events
\smallskip
\begin{align*}
\label{eqn:prob_prod}
    & P(E|\mathbf{F},\mathbf{C}) = \\ & \sqrt{\underset{n \in [N]} {\prod}\Big(\underset{m\in \mathcal{N}(n)}{\prod}P(n \sim m)\underset{m \notin \mathcal{N}(n)}{\prod} P(\neg(n \sim m))\Big)}.
\end{align*}
Here, the square root is taken since we assume that the graph is undirected so the product goes over all of the edges twice \cite{aicher2015wsbm,morup2011infinite}. 

When fitting an SBM to a graph, both the class affiliations of nodes and the block structure $\mathbf{C}$ are learned 
\cite{snijders1997sbm_estimation1,nowicki2001sbm_estimation2,latouche2012variational_sbm_estimation}.

\subsection{Community Affiliation Models}

Community detection is a fundamental task in network analysis, aiming to identify groups of nodes that are more densely connected internally than with the rest of the network. This process is useful for understanding the structure and function of complex networks, such as social, biological, and information networks \cite{fortunato2016community}.

Although there exist models in which each node belongs to only one community as in traditional SBM models \cite{holland1983stochastic}, and some relatively new deep learning models such as \cite{cavallari2017learning},
it was shown that real-world networks often exhibit overlapping communities \cite{yang2014structure,yang2013BigClam}, where nodes belong to multiple groups and the probability of connectivity increases the more communities two nodes share. This indeed makes intuitive sense when looking at, e.g., social networks, where the more common interests and social circles people share the more they are likely to connect.  

An example of a community affiliation model that can generate new graphs is AGM \cite{Overlapping2012}, which classifies all of the nodes in a graph into several communities, where each community $k$ has a probability $p_k$ for two member nodes to connect. If we denote by $C_{n,m}$ the set of communities two nodes $n,m \in [N]$  share, then the probability of an edge between $n$ and $m$ is
\[
p(n\sim m) = 1 - \prod_{k \in C_{nm}} (1 - p_k).
\]
To sample a new graph from a trained model, nodes are sampled and assigned communities based on the relative sizes of communities in the training graph, and edges are connected based on their mutual community memberships.  
 
Community affiliations can also be continuous, where nodes have varying degrees of membership in multiple communities.
Community Affiliation models with continuous communities include Mixed Membership SBM (MMSBM)  \cite{airoldi2008mixed} which is an SBM which allows nodes to have mixed memberships in multiple communities, and BigClam \cite{yang2013BigClam}, which is a Bernouli Poisson model model that scales efficiently for sparse graphs.

It is also worth mentioning the NOCD model presented in \cite{shchur2019overlapping} which uses the Bernouli Poisson loss, but embeds the nodes of the graph into the code space using a learned GNN. Another related paper is \cite{sun2019vgraph}, which models the features and communities separately, learning latent features from which it estimates the community affiliation. 

While community affiliations are typically non-negative, there are models where affiliations can be negative. An example is the Signed Stochastic Block Model (SSBM) \cite{jiang2015stochastic}.

\subsection{Anomaly Detection}
\label{extended_sec:anomaly_detection}
Anomaly detection in graphs aims to identify nodes or subgraphs that deviate significantly from the typical patterns within the graph. This is useful in various applications such as network security, fraud detection, and social network analysis. In the unsupervised formulation of the problem, there is no labeled data in the training process, We highlight five works in this direction. GAAN \cite{chen2020gaan} and AEGIS \cite{ding2021aegis} use a generative adversarial approach, training a discriminator to distinguish real and fake nodes. Dominant \cite{ding2019dominant} and AnomalyDAE \cite{fan2020anomalydae} identify anomalies via reconstruction errors of a graph autoencoder. 

Another work is \cite{qiao2024ggad}, which considers a semi-supervised setting. Here, there is a relatively small number of available labeled normal nodes during training (nodes that are known to not be anomalies), and the goal is to predict the anomalies in the unknown nodes.

\subsection{Normalizing Flows}
\label{extended_sec:normflows}

Normalizing flows are deep learning algorithms that estimate  probability distributions, and allow an efficient sampling from this distribution. They do so by constructing an invertible coordinate transformation between the unknown target probability space and a standard probability space with a well known distribution (e.g. Gaussian). This transformation is modeled as a deep neural network, composed of a series of basic transformations called \emph{flows}. 

\paragraph{Notations.}
Let $\mathcal{F}$ represent the space of the target data, with an unknown probability density function $p_\mathcal{F}:\mathcal{F} \rightarrow [0,\infty)$. We denote the elements of $\mathcal{F}$ by $\mathbf{f}$.

Let $\mathcal{Z}$ represent a latent space with a known probability density funcion $p_\mathcal{Z}:\mathcal{Z} \rightarrow [0,\infty)$. We denote the elements of $\mathcal{Z}$ by $\mathbf{z}$. The density function $p_\mathcal{Z}$ is often chosen to be a standard isotropic Gaussian with zero mean and identity covariance, denoted by $\mathcal{N}(0,\mathbf{I})$. 

\paragraph{Goal.} The goal in normalizing flows is to learn an invertible transformation $T_\theta: \mathcal{F} \rightarrow  \mathcal{Z}$ (parameterized by $\theta$) which maps the target space $\mathcal{F}$ to the latent space $\mathcal{Z}$, and preserves probabilities. Namely, $T_{\theta}$ should satisfy: for every measurable subset $F\subset \mathcal{F}$ we have
\[\int_F T_{\theta}(\mathbf{f}) p_{\mathcal{F}}(\mathbf{f})d\mathbf{f} = \int_{T_{\theta}(F)}  p_{\mathcal{Z}}(\mathbf{z})d\mathbf{z}.\]
Here $T_{\theta}(F)= \{z\in\mathcal{Z}\ |\ \exists \mathbf{f}\in F\ {\text{such that}}\ T_{\theta}(\mathbf{f})=\mathbf{z}\}$.
Such a transformation can be seen as a change of variable.

\paragraph{Density Transformation.}
Given an invertible transformation $T_\theta$, the target density $p_\mathcal{F}(\mathbf{f})$ can be expressed in terms of the latent density $p_\mathcal{Z}(\mathbf{z})$. By upholding the constraint that either probability density function has integral 1 over their respective spaces, one can deduce the change of variable formula
\[p_\mathcal{F}(\mathbf{f}) = p_\mathcal{Z}(T_\theta(\mathbf{f})) \bigg| \det \bigg( \frac{\partial T_\theta (\mathbf{f})}{\partial  \mathbf{f}}  \bigg) \bigg|. \]
Here, $\frac{\partial T_\theta (\mathbf{f})}{\partial  \mathbf{f}}$ denotes the Jacobian matrix of the transformation $T_\theta$ with respect to $\mathbf{F}$ and $\det(\cdot)$ denotes its determinant.

\paragraph{Sequential Composition of Flows.}

In practice, the transformation $T_\theta$ is modeled as a composition of $L\in \mathbb{N}$ simpler invertible transformations $\{T_{\theta^i}\}_{i=1}^L$ between the consecutive spaces $\{\mathcal{Z}^i\}$ where $\mathcal{Z}^L = \mathcal{F} $ and $\mathcal{Z}^1 = \mathcal{Z}$ in the previous notations. The invertible mappings $T_{\theta^i}: \mathcal{Z}^i \rightarrow \mathcal{Z}^{i-1}$ are called flows, and we have 
\[T_\theta = T_{\theta^L} \circ T_{\theta^{L-1}} \circ \dots \circ T_{\theta^1}\]
where $\circ$ denotes function composition. The overall density function for $\mathbf{f}$ then becomes
\[p_\mathcal{F}(\mathbf{f}) = p_\mathcal{Z}(T_\theta(\mathbf{f}))\prod_{i=1}^{L} \bigg| \det \bigg( \frac{\partial T_\theta^i (\mathbf{f}_i)}{\partial  \mathbf{f}_i}  \bigg) \bigg|. \]
Here, $\mathbf{f}_i$ is the intermediate representation after applying the first $i-1$ flows.  
The algorithm is optimized by maximum log likelihood, namely, by maximizing
\begin{align*}
     \log\big(p_\mathcal{F}(\mathbf{f})\big) 
     = \log\big(p_\mathcal{Z}(T_\theta(\mathbf{f}))\big) + \sum_{i=1}^{L} \log\Bigg(\bigg| \det \bigg( \frac{\partial T_\theta^i (\mathbf{f}_i)}{\partial  \mathbf{f}_i}  \bigg) \bigg|\Bigg), 
\end{align*}
using gradient descent.

Since $T_\theta$ and every $T_\theta^i$ are invertible, information can also flow in the opposite direction in order to sample data. First, a point $\mathbf{z}$ is sampled $\mathbf{z} \sim p_\mathcal{Z}$ and the inverse transformation $T_\theta^{-1}$ is applied to map the sample $\mathbf{z}$ into the data space $\mathcal{F}$. Since $T$ is composed of a series of flows, the generated point in the data space is 
\[\mathbf{f} = T_\theta^{-1}(\mathbf{z}) = T_{\theta^1}^{-1} \circ T_{\theta^2}^{-1} \circ \dots \circ T_{\theta^L}^{-1}(\mathbf{z}).\]

\paragraph{RealNVP.}
One popular method of implementing normalizing flow is the \emph{Real Valued Non-Volume Preserving (RealNVP)}. In this model, the flows $T_{\theta^i}$ are modeled as mappings called \emph{coupling blocks}. In a coupling block, the input  vector $\mathbf{f}^i$ (or $\mathbf{z}^i$, in the generative direction) is split into two vectors: $\mathbf{f}^i_A$ and $\mathbf{f}^i_B$, each the same dimension. Namely, $\mathbf{f}^i=(\mathbf{f}^i_A,\mathbf{f}^i_B)$.
The flow $T_{\theta^i}$ is then defined to be
\begin{align}
    \mathbf{f}_A^{i-1}& = \mathbf{f}_A^i \\
    \mathbf{f}_B^{i-1} &= \mathbf{f}_B^i \odot \mathbf{s}_{\theta^i}(\mathbf{f}_A^i) + \mathbf{t}_{\theta^i}(\mathbf{f}_A^i) 
\end{align}
Where $(\mathbf{s}_{\theta^i}(\cdot), \mathbf{t}_{\theta^i}(\cdot))$ are Multi Layer Perceptrons (MLPs) parameterized by $\theta^i$, and $\odot$ is elementwise product. 
In addition, the elements of $\mathbf{f}^i$ are permuted at every $i$ before applying the coupling block, using predefined permutations, that are parts of the hyperparameters of the model.
It is easy to see that the Jacobian $\bigg( \frac{\partial T_\theta^i (\mathbf{f}_i)}{\partial  \mathbf{f}_i}  \bigg)$ is a diagonal matrix with $1$ for the first $\dim(\mathbf{f}_A^i)$ elements, and $s_{\theta^i}(\mathbf{f}_A^i)$ for the last $\dim(\mathbf{f}_B^i)$ elements. This  makes the log of the determinant of the Jacobian
\begin{equation}
\log\Bigg(\bigg| \det \bigg( \frac{\partial T_\theta^i (\mathbf{f}_i)}{\partial  \mathbf{f}_i}  \bigg) \bigg|\Bigg) = \sum_{j=0}^{\dim(\mathbf{f}_B^i)}\log \Big( \mathbf{s}_{\theta^i}(\mathbf{f}_A^i)_j \Big). 
 \label{eqn_ext:log_det_jac}   
\end{equation}
Hence, this determinant is easily computed in practice.

For generation, the inverse transformation can be calculated easily as
\begin{align*}
\mathbf{z}_A^{i+1} &= \mathbf{z}_A^{i} \\
\mathbf{z}_B^{i+1} &= \frac{\mathbf{z}_B^{i} - \mathbf{t}(\mathbf{z}_A^i)}{\mathbf{s}(\mathbf{z}_A^i)}
\end{align*}
where the division is elementwise.
Here, the log determinant of the Jacobian can be calculated similarly to (\ref{eqn_ext:log_det_jac}),  applied to $1/\mathbf{s}(\mathbf{z}_A^i)$.

\subsection{The Lorentz Inner Product}
\label{The Lorentz Inner Product}

The Lorentz inner product is a bilinear form used in the context of special relativity to describe the spacetime structure. In a four-dimensional spacetime, the Lorentz inner product between two vectors \(\mathbf{v}\) and \(\mathbf{w}\) is given by:

\[
\langle \mathbf{v}, \mathbf{w} \rangle = v_0w_0 - v_1w_1 - v_2w_2 - v_3w_3
\]
Here, \(v_0\) and \(w_0\) represent the "time" components, while \(v_1, v_2, v_3\) and \(w_1, w_2, w_3\) represent the "spatial" components. The negative sign in front of the time component is what distinguishes the Lorentz inner product from the standard Euclidean inner product, making it suitable for modeling the geometry of spacetime where time and space are treated differently. Note that due to the subtraction, the Lorenz inner product is in fact not an inner product as it is not positive definite. In the context of special relativity, the points for which the Lorenz product remain positive define the so called "light cone" structure of spacetime, separating events into those that are causally connected and those that are not. 

The Lorentz inner product is a specific example of a broader class of inner products known as \textbf{pseudo-Euclidean inner products}. In a pseudo-Euclidean space, the inner product can have a mixture of positive and negative signs, leading to different geometric properties. These spaces generalize the concept of Euclidean space by allowing for non-positive definite metrics.

\section{PClam and PieClam as Graphons}

A graphon is a model which can be seen as a graph generative model that extends SBMs.
A graphon \cite{graphon1,cut-homo3} can be seen as a weighted graph with a ``continuous'' node set $[0,1]$.

\begin{definition}
    The space of graphons $\mathcal{W}$ is defined to be the set of all measurable function  $W:[0,1]^2\rightarrow [0,1]$ which are symmetric, namely $W(x,y)=W(y,x)$. 
\end{definition}
The edge weight $W(x,y)$ of a graphon $W\in \mathcal{W}$ can be seen as the probability of having an edge between node $x$ and node $y$.   Given a graphon $W$, a random graph is generated by sampling  independent uniform nodes $\{X_n\}$ from the graphon domain $[0,1]$, and connecting each pair $X_n,X_m$ in probability $W(X_n,X_m)$ to obtain the edges of the graph.

We next show that Clam models with prior are special cases of graphon models. 
Note that the prior $p$ defines a standard atomless probability spaces over the community affiliation space $\mathbb{R}^K$. Since all standard atomeless probability spaces are equivalent, there is a probability preserving a.e. bijection  $\xi_p:[0,1]\rightarrow\mathbb{R}^K$ that maps the prior probability to the uniform probability over $[0,1]$. Now, the PieClam model for generating a graph of $N$ nodes can be written as follows.
\begin{itemize}
    \item 
    Sample $N$ points $\{X_n\}_{n=1}^N\subset[0,1]$ uniformly. Observe that $\{\xi_p(X_n)\}_{n=1}^N\subset \mathbb{R}^K$ are indepndent samples via the probability density $p$.
    \item 
    Connect the points according to the BigClam of IeClam models $P(n\sim m| \xi_p(X_n),\xi_p(X_m))$ (\ref{eq:ClamEdge}) or (\ref{eq:LorFeature}).
\end{itemize}
This shows that PClam and PieClam coincide with the generative graphon model $W(x,y)=P(n\sim m| \xi_p(X),\xi_p(Y))$ where $P$ is defined either by  (\ref{eq:ClamEdge}) or by (\ref{eq:LorFeature}).

\section{Extended Details on Experiments}
\label{Extended Details on Experiments}

All of the experiments were run on Nvidia GeForce RTX 4090 and Nvidia L40 GPUs.
Our code can be found in: 

\url{https://anonymous.4open.science/r/PieClam-49C4}

\subsection{Additional Architecture Details in PieClam and PClam}

\paragraph{Added Affiliation Noise.}
When training the prior, overfitting may cause the probability to spike around certain areas in the affiliation space, e.g., around the affiliation features of the nodes. To avoid this issue, we add gaussian noise to the affiliation vector as a regularization at each step of training the normalizing flow prior model. The normalizing flow model then transforms the same point to a slightly different location in the code space. The noise optimization therefore provides a resolution to the prior, smoothing the distribution in the affiliation space.  Noise addition is the primary regularization method we have used when training the prior in all of our experiments.

\paragraph{Densification.}
For very sparse datasets, Clam models may not behave well. In such cases, the community structure is sometimes unstable, where the same node can find itself in different communities based on slightly different initial conditions. In order to strengthen the connections within communities, we apply a two-hop densification scheme on very sparse graphs. Namely, we connect two disjoint nodes $n,m$ with an edge if there is a third node $k$ for which $n \sim k$ and $m\sim k$. We find that this scheme improves anomaly detection on Elliptic, Photo and Reddit datasets. 
Densification may strengthen the community structure in some datasets, but can also destroy it in others. In the experiments on synthetic datasets we did not use densification, as these graphs are relatively dense. 

\subsection{Anomaly detection}

\paragraph{Metric.} In order to measure the accuracy of the anomaly detection classification we use the area under the Receiver Operating Characteristic (ROC) curve \cite{peterson1954roc}. The curve is a plot of the true positive rate (TPR) against the false positive rate (FPR) for the range of the threshold values between the value that classifies all samples as true and the value that classifies all as false. The area under the curve signifies the general tendency of the curve toward the point (0,1) for which FPR$= 0$ and TPR$= 1$.

\paragraph{Additional experiment.} We add additional experiments on anomaly detection using PieClam, where we averaged the accuracy over 10 rounds and included error bars.

In this setup we compose three iteration steps of the form $\mathbf{F}$-500 $\rightarrow$ $p$-1300 with initial learning rates of $3e-6$ for the features and $2e-6$ for the prior, and initial noise amplitude of $0.05$. The learning rates and noise amplitude were halved after every $\mathbf{F} \rightarrow p$ iteration.  The results are in Table~\ref{Table2}.

Furthermore, in this section we also run our anomaly detection methods without using the node features, namely, only using the graph structure. We find that we comparable results on Photo and Reddit with or without node features, but that the detection on Elliptic decreases significantly. Still, anomaly detection on Elliptic without node features is competitive with state of the art models that do use node features.  The results are in Table~\ref{Table22}.

\begin{table*}[t]
\centering
\begin{tabular}{|c|c|c|c|}
    \hline
  Method  & Reddit& Elliptic& Photo\\
    \hline

 (S)- IeClam& \textbf{0.639}& 0.440&\textbf{0.592}\\
 (S) - PieClam& 0.6320 ± 0.0054&  
0.4354 ± 0.0005&*0.5727 ± 0.0151\\
 (P) - PieClam& 0.5089 ± 0.0219& \textbf{0.6201 ± 0.0176}&0.4252 ± 0.0074\\
 (PS) - PieClam& *0.6329 ± 0.0052& \underline{0.5294 ± 0.0089}&0.5289 ± 0.0127\\
 (S) - BigClam & \underline{0.637}& 0.434&\underline{0.581}\\
 \hline
 DOMINANT  & 0.511& 0.296&0.514\\
 AnomalyDAE &0.509& *0.496&0.507\\
 OCGNN  &0.525 & 0.258&0.531\\
 AEGIS &0.535 & 0.455&0.552\\
 GAAN &0.522 &0.259 &0.430\\
 TAM &0.606 & 0.404 & 0.568\\
 \hline
\end{tabular}
\caption{Comparison of Clam anomaly detectors with competing methods. 
 First place in \textbf{boldface}, second with \underline{underline}, third with *star. We observe that our methods are first place on all datasets. Moreover, S- IeClam, PS-PieClam and S BigClam each beats the competing methods in two out of the three datasets . The accuracy metric is areas under curve (AUC).}
\label{Table2}
\end{table*}
We proceed to compare the optimization that includes the node features to the optimization that didn't. The comparison is presented in

\begin{table*}[t]
\centering
\begin{tabular}{|c|c|c|c|}
    \hline
  Method  & Reddit & Elliptic & Photo \\
    \hline
  (S) - PieClam & 0.6320 ± 0.0054 & 0.4354 ± 0.0005 & \textbf{0.5727 ± 0.0151} \\
  (P) - PieClam & 0.5089 ± 0.0219 & \textbf{0.6201 ± 0.0176} & 0.4252 ± 0.0074 \\
  (PS) - PieClam & *0.6329 ± 0.0052 & \underline{0.5294 ± 0.0089} & 0.5289 ± 0.0127 \\
    \hline
  (S) - PieClam (No Attr) & \underline{0.6346 ± 0.0015} & 0.4349 ± 0.0009 & *0.5696 ± 0.0065 \\
  (P) - PieClam (No Attr) & 0.4530 ± 0.0142 & *0.4826 ± 0.0134 & 0.4977 ± 0.0747 \\
  (PS) - PieClam (No Attr) & \textbf{0.6351 ± 0.0015} & 0.4349 ± 0.0009 & \underline{0.5697 ± 0.0065} \\
    \hline
\end{tabular}
\caption{Comparison of attributed and unattributed PieClam anomaly detection.}
\label{Table22}
\end{table*}

\subsubsection{T-Test Comparison Summary.}
We conducted t-tests to determine if node features/attributes ("With Features") significantly improve performance compared to not using attributes ("Without Features"), across the Reddit, Elliptic, and Photo datasets for each method (S, P, PS). Even though S method is not affected directly by the affiliation vectors, the latter can affect the convergence of the prior in the affiliation space and have an indirect effect.
\begin{itemize}
    \item \textbf{Reddit Dataset}: Attributes did not significantly improve performance for the "Star" (S) and "Prior Star" (PS) methods. However, "Prior" (P) did show a significant improvement with attributes (p-value=0.0002).
    \item \textbf{Elliptic Dataset}: "Prior" (P) and "Prior Star" (PS) both showed significant improvements with attributes (p-value$<$0.0001). However, the "Star" (S) method did not significantly benefit from attributes.
    \item \textbf{Photo Dataset}: The "Star" (S) method did not show significant difference (p-value=0.6864). The "Prior Star" (PS) method, however, did show a significant benefit from attributes (p-value$<$0.0001), and "Prior" (P) had a borderline significant result (p-value=0.0529).
\end{itemize}

In conclusion, attributes did not make a difference for the "Star" (S) method across all datasets (as would be expected), and for the "Prior Star" (PS) method in the Reddit dataset. However, the "Prior" (P) and "Prior Star" (PS) methods mostly showed improvement in the Elliptic dataset. 

\subsection{SBM Reconstruction And Distance Convergence}
We extend the experiment in Section \ref{sec:reconstruction_sbm} by considering another  synthetic SBM, which is off-diagonal dominant. As in Section \ref{sec:reconstruction_sbm}, we sample a simple graph with $N=210$ nodes from the SBM. The non zero probability blocks have a probability of $0.5$. We estimate the log cut distance between the Clam models (BigClam and IeClam) and the SBM during training. We consider for both methods affiliation spaces of dimensions $2,4$ and $6$. In addition, we calculate the cut distance and l2 errors between the Clam models and the SBM. The results are shown in Figures~\ref{fig:sbm_pclam1}--\ref{fig:sbm_pclam6}.


\begin{figure}[H]
\centering
    \begin{minipage}{0.3\columnwidth}
\includegraphics[width=\textwidth]{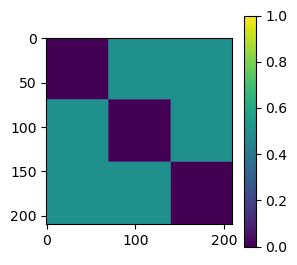}       
\end{minipage}\hspace{0.02\columnwidth}  
\begin{minipage}{0.3\columnwidth}
\includegraphics[width=\textwidth]{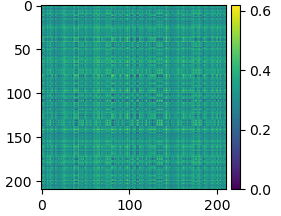}       \end{minipage}\hfill

\begin{minipage}{1.0\columnwidth}
\includegraphics[width=\textwidth]{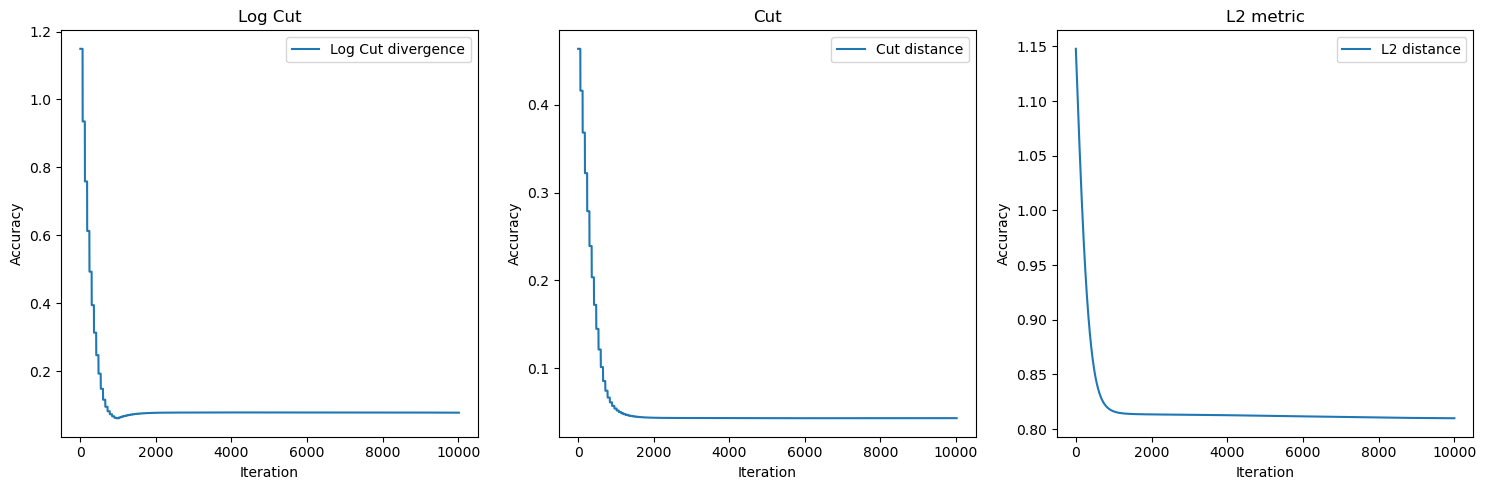}
       \end{minipage}
    \caption{Left to right: Target SBM. Fitted BigClam graph with two communities. Error as a function of optimization iteration where error is, left to right,  log cut distance, cut distance, l2 distance. After convergence, the log cut distance between the SBM and BigClam is 0.0776.}
    \label{fig:sbm_pclam1}
\end{figure}

\begin{figure}[H]
\centering
    \begin{minipage}{0.3\columnwidth}
\includegraphics[width=\textwidth]{images/sbm_reconstruction/halfdiag.png}       
\end{minipage}\hspace{0.02\columnwidth}  
\begin{minipage}{0.3\columnwidth}
\includegraphics[width=\textwidth]{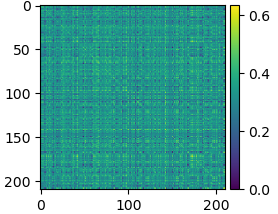}       \end{minipage}\hfill  
\begin{minipage}{1.0\columnwidth}
\includegraphics[width=\textwidth]{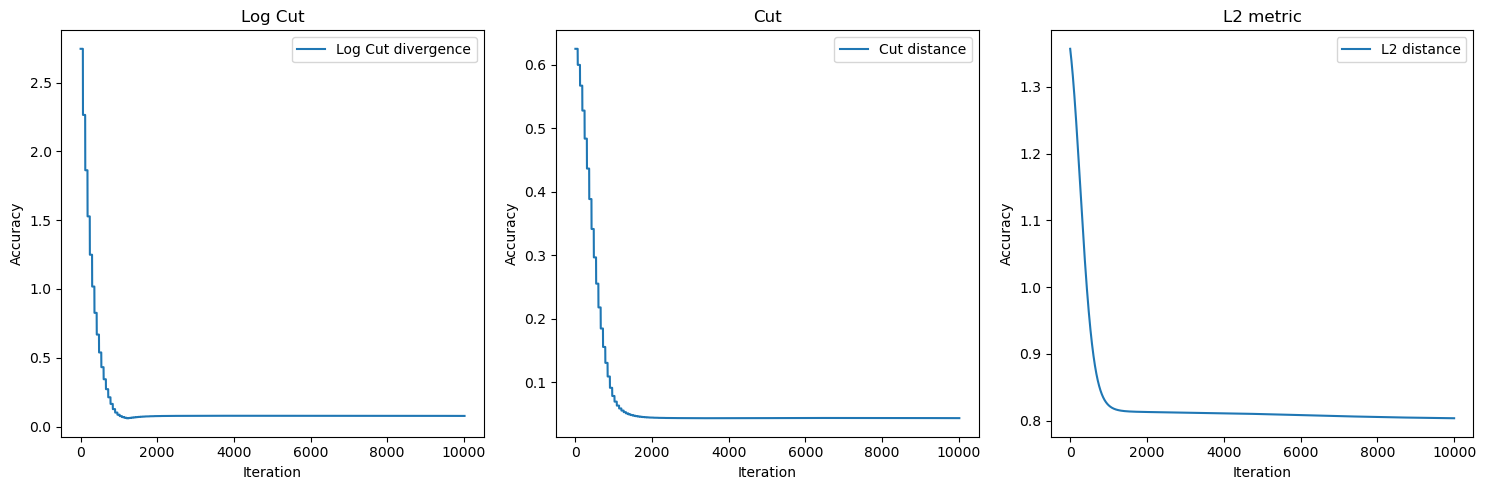}
       \end{minipage}
    \caption{Left to right: Target SBM. Fitted BigClam graph with four communities. Error as a function of optimization iteration where error is, left to right,  log cut distance, cut distance, l2 distance. After convergence, the log cut distance between the SBM and BigClam is 0.0775.}
    \label{fig:sbm_pclam2}
\end{figure}

\begin{figure}[H]
\centering
    \begin{minipage}{0.3\columnwidth}
\includegraphics[width=\textwidth]{images/sbm_reconstruction/halfdiag.png}       
\end{minipage}\hspace{0.02\columnwidth}    
\begin{minipage}{0.3\columnwidth}
\includegraphics[width=\textwidth]{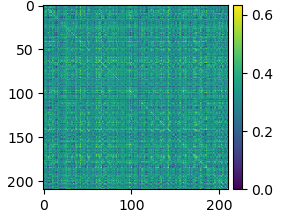}       \end{minipage}\hfill  
\begin{minipage}{1.0\columnwidth}
\includegraphics[width=\textwidth]{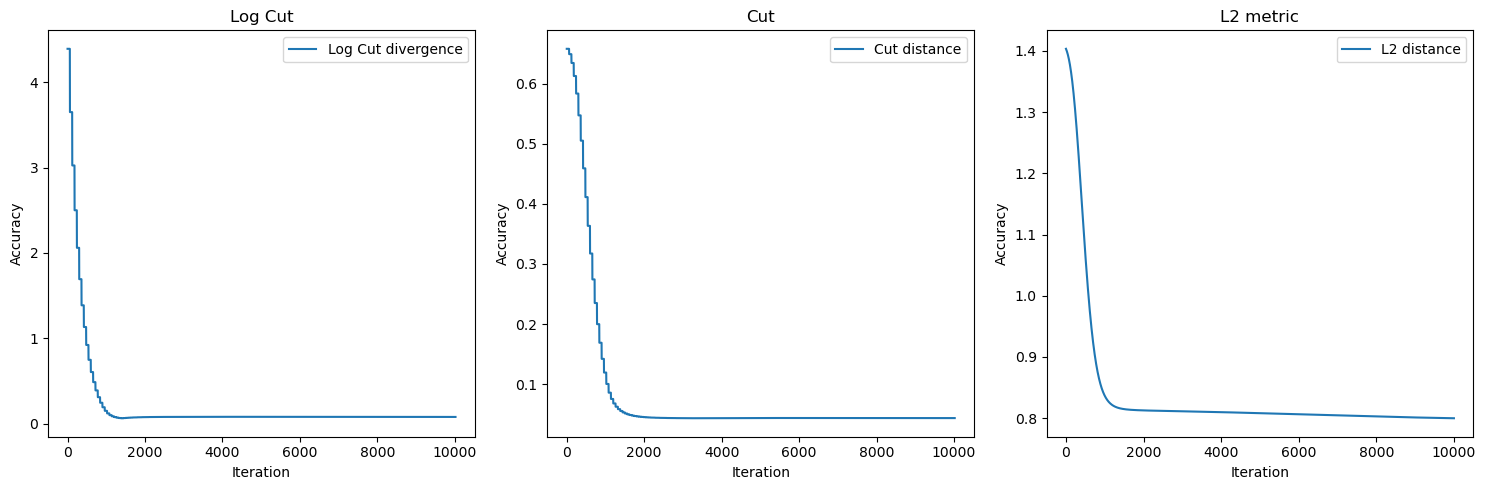}
       \end{minipage}
    \caption{Left to right: Target SBM. Fitted BigClam graph with six communities. Error as a function of optimization iteration where error is, left to right,  log cut distance, cut distance, l2 distance. After convergence, the log cut distance between the SBM and BigClam is 0.0776.}
    \label{fig:sbm_pclam3}
\end{figure}


\begin{figure}[H]
\centering
    \begin{minipage}{0.3\columnwidth}
\includegraphics[width=\textwidth]{images/sbm_reconstruction/halfdiag.png}       
\end{minipage}\hspace{0.02\columnwidth}    
\begin{minipage}{0.3\columnwidth}
\includegraphics[width=\textwidth]{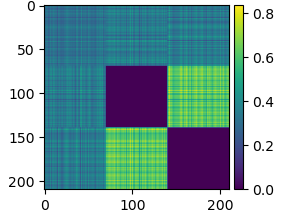}       \end{minipage}\hfill 

\begin{minipage}{1.0\columnwidth}
\includegraphics[width=\textwidth]{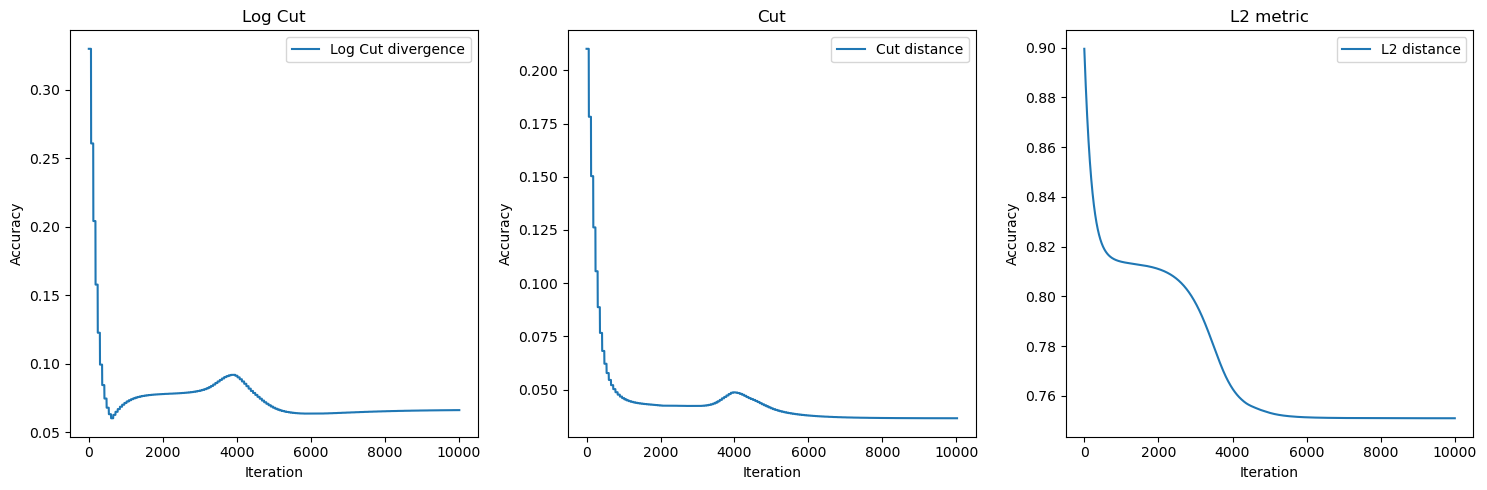}
       \end{minipage}
    \caption{Left to right: Target SBM. Fitted IeClam graph with two communities. Error as a function of optimization iteration where error is, left to right,  log cut distance, cut distance, l2 distance. After convergence, the log cut distance between the SBM and IeClam is 0.0662.}
    \label{fig:sbm_pclam4}
\end{figure}

\begin{figure}[H]
\centering
    \begin{minipage}{0.3\columnwidth}
\includegraphics[width=\textwidth]{images/sbm_reconstruction/halfdiag.png}       
\end{minipage}\hspace{0.02\columnwidth}    
\begin{minipage}{0.3\columnwidth}
\includegraphics[width=\textwidth]{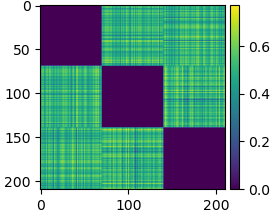}       \end{minipage}\hfill

\begin{minipage}{1.0\columnwidth}
\includegraphics[width=\textwidth]{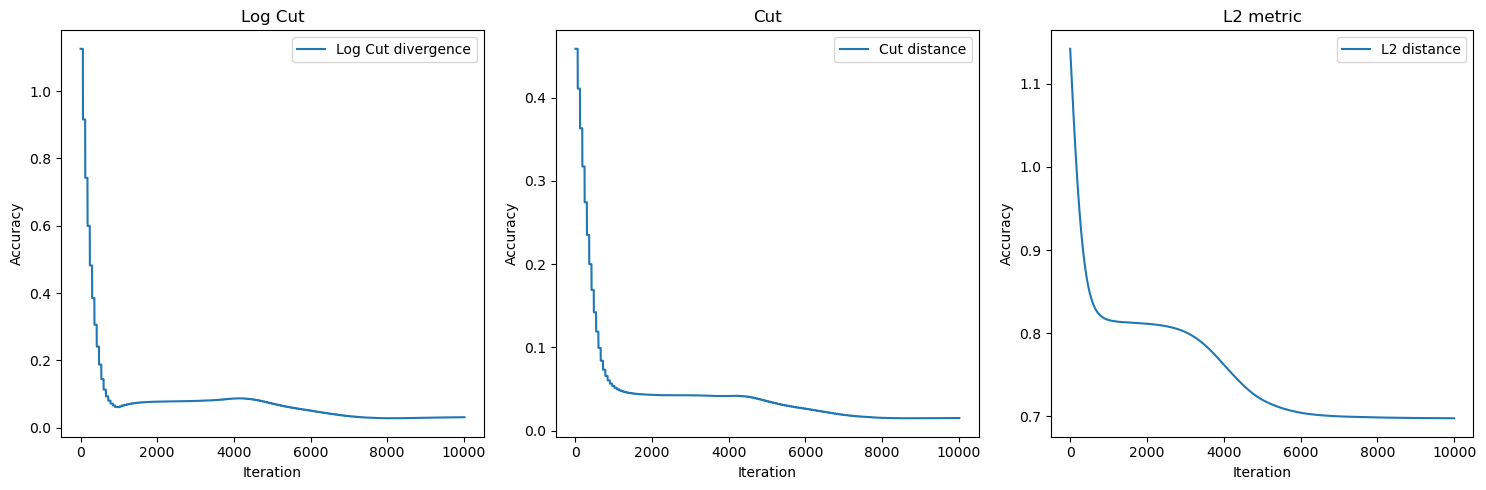}
       \end{minipage}
    \caption{Left to right: Target SBM. Fitted IeClam graph with four communities. Error as a function of optimization iteration where error is, left to right,  log cut distance, cut distance, l2 distance. After convergence, the log cut distance between the SBM and IeClam is 0.0312.}
    \label{fig:sbm_pclam5}
\end{figure}

\begin{figure}[H]
\centering
    \begin{minipage}{0.3\columnwidth}
\includegraphics[width=\textwidth]{images/sbm_reconstruction/halfdiag.png}       
\end{minipage}\hspace{0.02\columnwidth}    
\begin{minipage}{0.3\columnwidth}
\includegraphics[width=\textwidth]{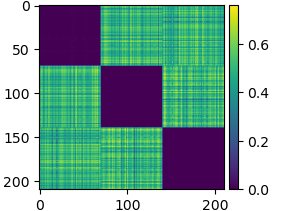}       \end{minipage}\hfill

\begin{minipage}{1.0\columnwidth}
\includegraphics[width=\textwidth]{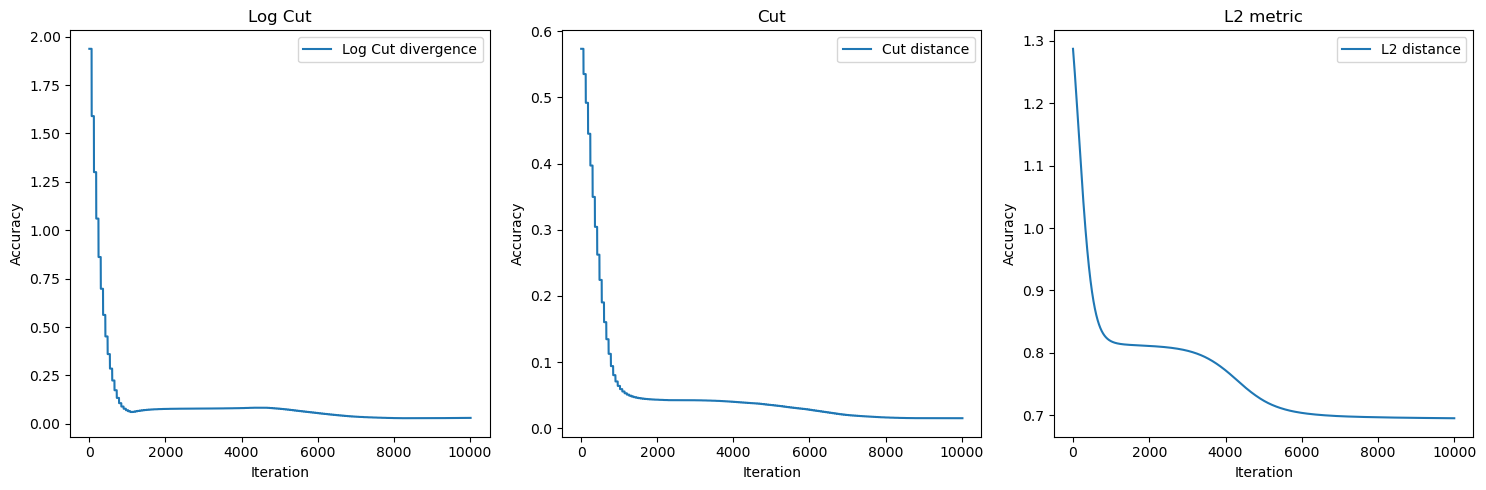}
       \end{minipage}
    \caption{Left to right: Target SBM. Fitted IeClam graph with six communities. Error as a function of optimization iteration where error is, left to right,  log cut distance, cut distance, l2 distance. After convergence, the log cut distance between the SBM and IeClam is 0.0309.}
    \label{fig:sbm_pclam6}
\end{figure}







\end{document}